
\documentclass{article}

\usepackage{microtype}
\usepackage{graphicx}
\usepackage{subcaption}
\usepackage{booktabs} 
\usepackage{multirow}

\usepackage{hyperref}


\usepackage{pythonhighlight}

\usepackage[accepted]{icml2026}

\urlstyle{same}

\usepackage{amsmath}
\usepackage{amssymb}
\usepackage{mathtools}
\usepackage{amsthm}
\mathtoolsset{showonlyrefs}

\newcommand{\R}{\mathbb{R}}
\newcommand{\E}{\mathbb{E}}

\usepackage[capitalize,noabbrev]{cleveref}

\theoremstyle{plain}
\newtheorem{theorem}{Theorem}[section]
\newtheorem{proposition}[theorem]{Proposition}

\newtheorem{corollary}[theorem]{Corollary}
\theoremstyle{definition}

\theoremstyle{remark}
\newtheorem{remark}[theorem]{Remark}



\icmlkeywords{density estimation,score estimation,transformers,kernel methods,equivariant networks,nonparametric statistics}


\begin{document}

\twocolumn[

\icmltitle{DiScoFormer: Plug-In Density and Score Estimation with Transformers}



\begin{icmlauthorlist}
\icmlauthor{Vasily Ilin}{yyy,mathai}
\icmlauthor{Peter Sushko}{comp}
\icmlauthor{Ranjay Krishna}{cse,comp}
\end{icmlauthorlist}

\icmlaffiliation{yyy}{Department of Mathematics, University of Washington, Seattle, USA}
\icmlaffiliation{mathai}{Math AI Lab, University of Washington, Seattle, USA}
\icmlaffiliation{comp}{Allen Institute for Artificial Intelligence, Seattle, USA}
\icmlaffiliation{cse}{Paul G.\ Allen School of Computer Science \& Engineering, University of Washington, Seattle, USA}

\icmlcorrespondingauthor{Vasily Ilin}{vilin@uw.edu}

\vskip 0.3in
]

\printAffiliationsAndNotice{}

\begin{abstract}

Estimating probability density and its score from samples remains a core problem in generative modeling, Bayesian inference, and kinetic theory. Existing methods are bifurcated: classical kernel density estimators (KDE) generalize across distributions but suffer from the curse of dimensionality, while neural score-matching models achieve high precision but require retraining for every target distribution. We introduce DiScoFormer (Density and Score Transformer), an equivariant Transformer that maps i.i.d.\ samples to both density values and score vectors. Unlike score matching, which learns a fixed function $\mathbb{R}^d\!\to\!\mathbb{R}^d$ for a single distribution, DiScoFormer learns a \emph{sequence-to-sequence} operator that generalizes across distributions and sample sizes without retraining. Analytically, we prove that self-attention can recover normalized KDE, establishing it as a functional generalization of kernel methods; empirically, individual attention heads learn multi-scale, kernel-like behaviors. The model outperforms KDE for density and score estimation, and provides a plug-in score oracle for score-debiased KDE, Fisher information computation, and Fokker--Planck-type PDEs.




\end{abstract}

\section{Introduction}


Estimating a probability density $f$ and its corresponding score function $\nabla \log f$ from i.i.d.\ samples is a foundational problem in statistical inference, underpinning tasks in generative modeling, stochastic control, and information theory. While classical non-parametric methods, such as Kernel Density Estimation (KDE), offer strong theoretical guarantees and respect fundamental symmetries, they are hampered by a rigid bias-variance trade-off and a prohibitive ``curse of dimensionality" as dimension $d$ increases. Conversely, modern neural score-matching models achieve high accuracy in high-dimensional settings but are typically transductive: they must be retrained for every new target distribution, precluding their use as general-purpose, ``off-the-shelf" estimators.

We propose bridging this gap by framing density and score estimation as a sequence-to-operator \cite{SutskeverVinyalsLe2014Seq2Seq} learning task. Given an i.i.d.\ sample $X = \{x_i\}_{i=1}^n \subset \mathbb{R}^d$, we seek to learn a single model that maps the entire sample $X$ to its underlying density and score functions. Specifically, we define an operator $T$ for the log-density and $S$ for the score:
\begin{equation}
    \begin{pmatrix}
        x_1 \\[-2pt]
        \vdots \\[-2pt]
        x_n
    \end{pmatrix}
    \xrightarrow{T}
    \begin{pmatrix}
        \log f(x_1) \\[-2pt]
        \vdots \\[-2pt]
        \log f(x_n)
    \end{pmatrix},
    \quad
    \begin{pmatrix}
        x_1 \\[-2pt]
        \vdots \\[-2pt]
        x_n
    \end{pmatrix}
    \xrightarrow{S}
    \begin{pmatrix}
        \nabla\log f(x_1) \\[-2pt]
        \vdots \\[-2pt]
        \nabla\log f(x_n)
    \end{pmatrix}.
\end{equation}
A critical requirement for such an operator is that it must respect the inherent symmetries of the data. Specifically, the estimates should be permutation-equivariant with respect to the sample index and affine-equivariant with respect to the coordinate space. For a permutation matrix $P$, invertible matrix $A$, and shift $\mu$, these symmetries are defined as:
\begin{align}
T(PXA+\mathbf{1}\mu^\top)&=P\,T(X)-\log|\det A|\,\mathbf{1},\\
S(PXA+\mathbf{1}\mu^\top)&=P\,S(X)\,A^{-\top},
\end{align}
Building on these principles, we introduce DiScoFormer -- Density and Score Transformer. By treating the sample $X$ as a sequence, the Transformer architecture provides permutation equivariance by construction. We achieve affine equivariance through a specialized whitening mechanism that normalizes inputs up to orthogonal transformations, combined with data augmentation to ensure rotation invariance. Unlike traditional models, our architecture utilizes cross-attention, allowing for the evaluation of $f$ and $\nabla \log f$ at arbitrary query points without retraining.

Furthermore, we provide a theoretical bridge between modern attention mechanisms and classical non-parametrics. We demonstrate analytically that under specific parameterizations, self-attention weights recover normalized KDE weights. This expressivity result positions the Transformer not as a black box, but as a principled, data-adaptive generalization of kernel methods that can dynamically learn multiscale smoothing behaviors from data.

Empirically, the model outperforms KDE and score-debiased KDE \cite{EpsteinEtAl2025SDKDE} across a range of dimensions and sample sizes on both in-distribution and out-of-distribution test cases. Trained on Gaussian Mixture Model (GMM) data, the model generalizes to GMMs with more modes and non-Gaussian targets such as the Laplace distribution and demonstrates favorable scaling in both $n$ and $d$. By amortizing the estimation cost, our model provides high-fidelity plug-in oracles for downstream tasks, including entropy estimation, Fisher information calculation, and solving Fokker-Planck-type partial differential equations (PDEs).



In summary, our contributions are:
\begin{itemize}
\item DiScoFormer, a universal Transformer model for one-shot estimation of density and score functions from i.i.d.\ samples.
\item A proof and empirical evidence establishing self-attention as a data-adaptive generalization of normalized kernel density estimation.
\item Non-parametric density and score estimators that surpass classical KDE methods in accuracy across sample sizes and dimensions.
\item Applications to Fisher information, differential entropy, and Fokker-Planck-type PDEs.
\end{itemize}

\section{Related Work}
Our work lies at the intersection of nonparametric density estimation, score-based methods, permutation-invariant neural architectures, and operator learning for probability measures.

\paragraph{Kernel density estimation.}
Classical nonparametric methods such as Parzen windows \citep{Parzen1962} and Silverman's rule-of-thumb KDE \citep{Silverman1986,WandJones1994} estimate densities via local kernel averaging. While theoretically grounded, these methods suffer from a rigid bias--variance trade-off and scale poorly with dimension \citep{Scott2015}. Adaptive bandwidth methods \citep{Abramson1982} and fast algorithms such as the Fast Gauss Transform \citep{GreengardStrain1991} partially alleviate computational costs but do not resolve the fundamental curse of dimensionality. Score-Debiased KDE \citep{EpsteinEtAl2025SDKDE} demonstrates that access to a score oracle enables bias correction, motivating our approach: we provide precisely this missing oracle as a reusable, distribution-agnostic component.

\paragraph{Other nonparametric density estimators.}
$k$-nearest-neighbor methods \citep{LoftsgaardenQuesenberry1965}, orthogonal series estimators \citep{Efromovich2010,ScottASH1985}, and normalizing flows \citep{RezendeMohamed2015,DinhEtAl2017RealNVP,PapamakariosEtAl2017MAF} offer alternatives to KDE but either lack direct score estimates or require per-distribution retraining. Our model provides both density and score from a single pretrained network.

\paragraph{Score matching and score-based generative models.}
Score matching \citep{Hyvarinen2005} and its denoising variant \citep{Vincent2011} estimate unnormalized model scores without computing partition functions. Sliced score matching \citep{SongEtAl2019SlicedScoreMatching} scales these ideas to higher dimensions. Score-based generative models \citep{SongErmon2019NCSN,HoJainAbbeel2020DDPM,SongEtAl2021SDE,SongEtAl2021DDIM} learn scores along diffusion processes for sampling, with fast solvers \citep{LuEtAl2022DPMSolver,Karras2022EDM} and convergence guarantees \citep{Chen2023SamplingScore,Chen2024ProbabilityFlowFast}. These methods train on a single target density; in contrast, our model learns a universal score operator that generalizes across distributions without retraining.

\paragraph{Permutation-invariant architectures.}
DeepSets \citep{ZaheerEtAl2017DeepSets} and Set Transformers \citep{LeeEtAl2019SetTransformer} provide foundational architectures for exchangeable data, while equivariant graph networks \citep{MaronEtAl2019InvariantEquivariantGNN} extend these ideas to richer symmetry groups. Neural Processes \citep{GarneloEtAl2018NeuralProcesses,EdwardsStorkey2017NeuralStatistician} and Perceiver IO \citep{JaegleEtAl2022PerceiverIO} learn distributions over functions from context sets. Recent work shows that permutation-equivariant transformers can universally approximate mean-field dynamics \citep{BiswalEtAl2025MeanFieldUniversalApprox}. Our architecture exploits the same symmetry but outputs statistical quantities---density and score---while additionally enforcing affine equivariance.

\paragraph{Attention as kernel smoothing.}
Formal connections between softmax attention and kernel methods have been developed through linear-attention reformulations \citep{KatharopoulosEtAl2020LinearAttention}, random-feature approximations \citep{ChoromanskiEtAl2021Performer}, Nystr\"om methods \citep{XiongEtAl2021Nystromformer}, explicit Gaussian-kernel replacements \citep{ChenEtAl2021Skyformer,LuEtAl2021SOFT}, and Nadaraya--Watson regression interpretations \citep{ZhangEtAl2023D2LAttention}. Our Proposition~\ref{prop: attention computes KDE} builds on this lineage by showing that (cross-)attention weights can exactly reproduce normalized Gaussian KDE weights at arbitrary query points.

\paragraph{Neural operators.}
Neural Operators \citep{KovachkiEtAl2023NeuralOperator}, DeepONet \citep{LuJinKarniadakis2021DeepONet}, and Fourier Neural Operators \citep{LiEtAl2021FNO} learn mappings between infinite-dimensional function spaces. The Score Neural Operator \citep{LiaoEtAl2024ScoreNeuralOperator} extends this framework to probability distributions via RKHS embeddings. In contrast, our transformer acts directly on raw i.i.d.\ samples without kernel embeddings, yielding a simpler operator that generalizes across densities and dimensions.

\paragraph{Particle methods and Fokker--Planck solvers.}
Interacting particle methods for Fokker--Planck equations \citep{CarrilloEtAl2019BlobMethod,Maoutsa2020RKHS} and score-based transport modeling \citep{BoffiVandenEijnden2023SBTM,LuWuXiang2024ScoreTransport,IlinHuWang2025TransportFPL} require accurate score estimates at each timestep. Stein Variational Gradient Descent \citep{LiuWang2016SVGD,Liu2017SVGDGradientFlow,Duncan2023GeometrySVGD} similarly relies on kernel-based score approximations. Our model provides a fast, pretrained oracle that plugs directly into these solvers, avoiding per-distribution retraining.

\section{Methodology}
\begin{figure}
    \centering
    \includegraphics[width=0.98\linewidth]{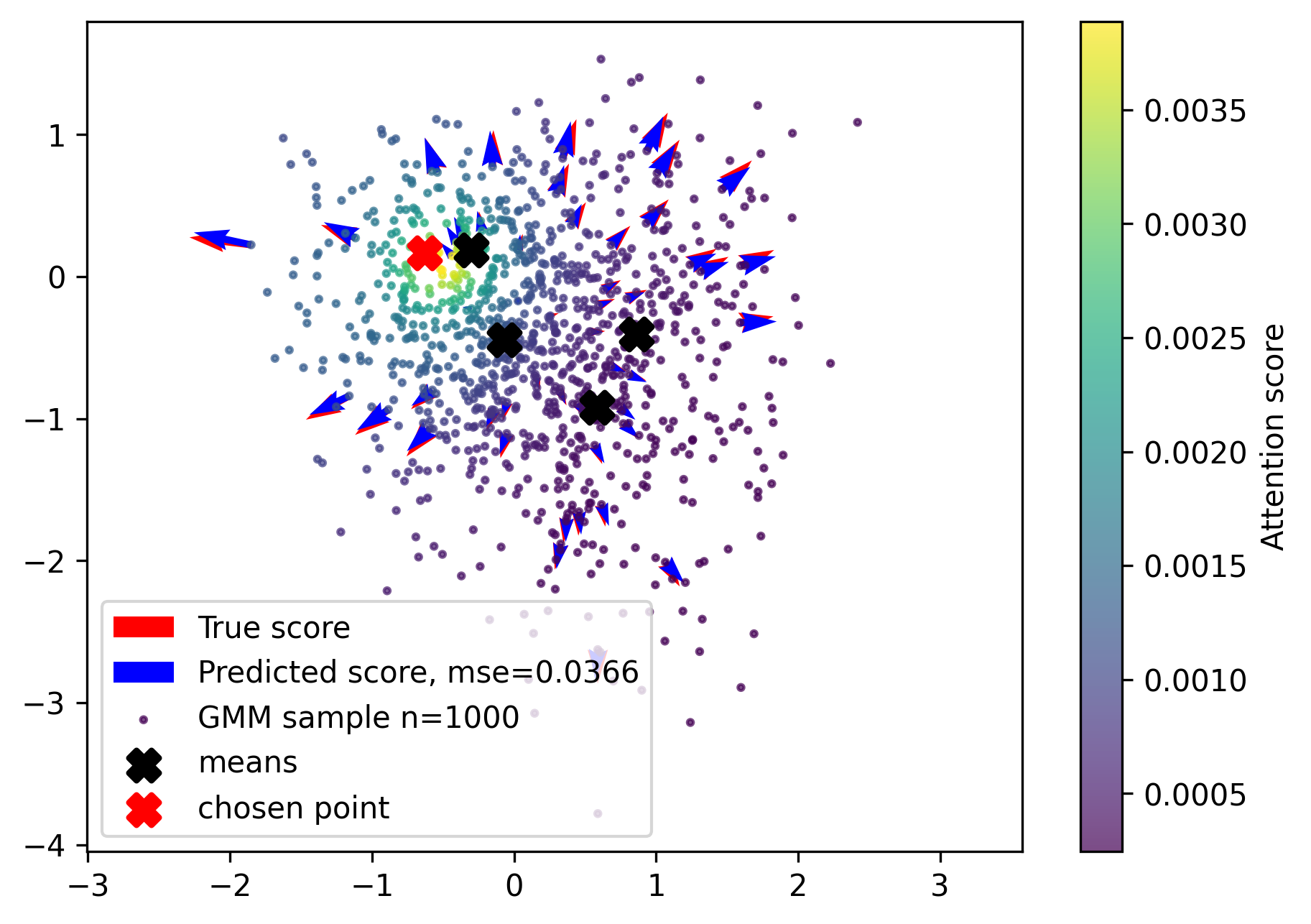}
    \includegraphics[width=0.98\linewidth]{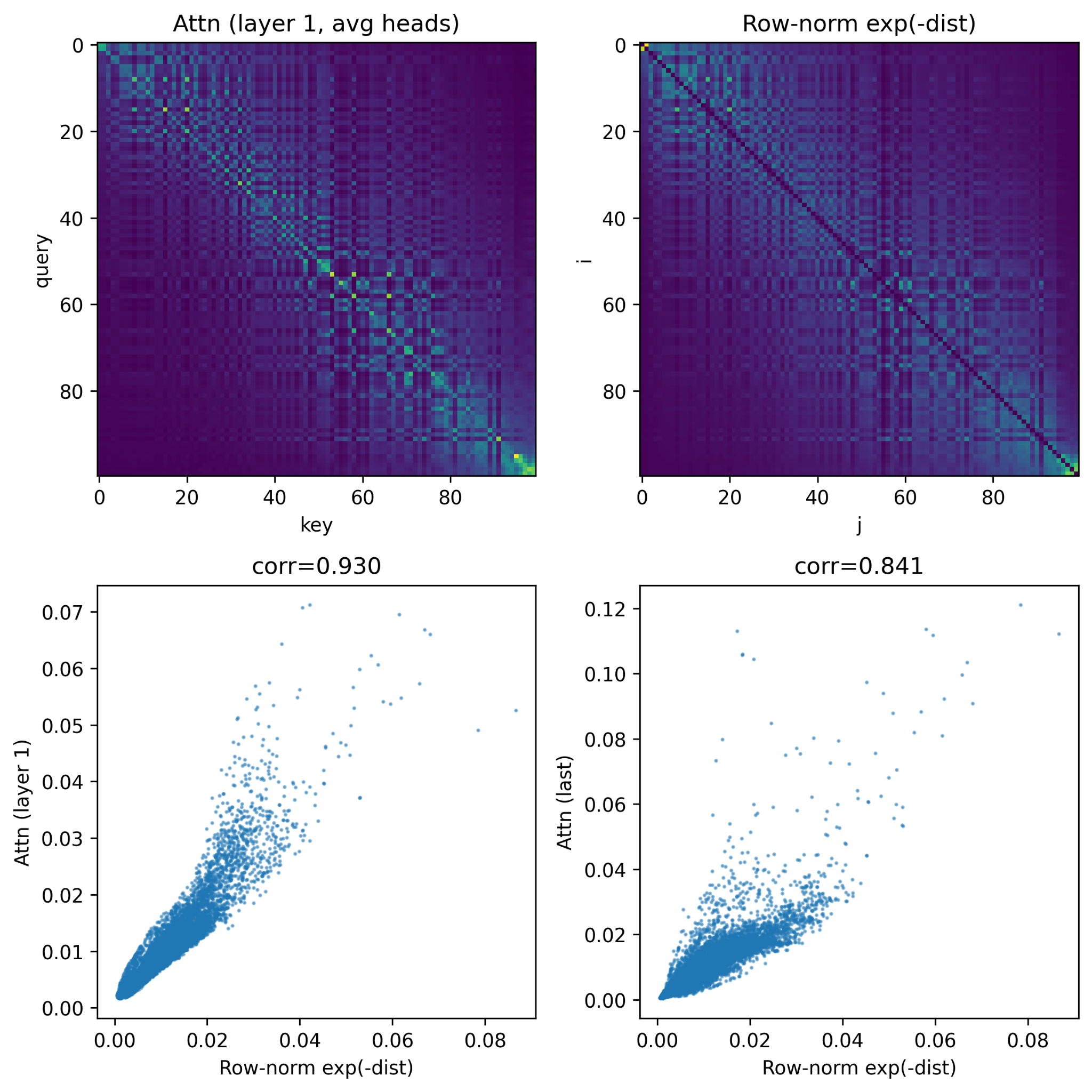}
    \includegraphics[width=0.98\linewidth]{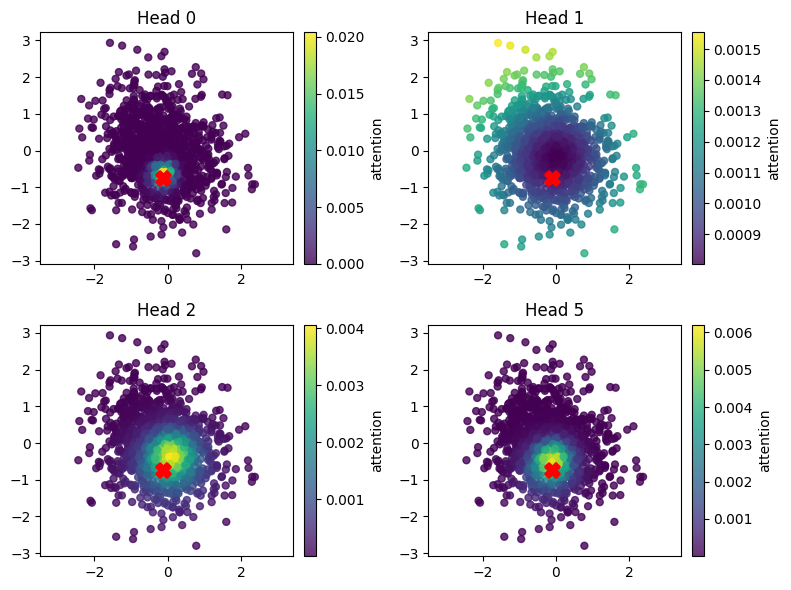}
    \caption{Attention visualization. Top: average attention in layer 0 from query \textcolor{red}{x}; heatmaps show the attention matrix and the normalized KDE matrix $D_{ij}\propto e^{-\|x_i-x_j\|^2_2}$, and scatter plots show very high agreement. Bottom: individual heads exhibit emergent specialization (close-range, far-range, directional).}
    \label{fig:attention visualization}
\end{figure}

In this section we describe the core ideas of the proposed method: the symmetry-aware architecture, the relationship to classical kernel density estimation, the training pipeline, and implementation.

\subsection{Equivariance}
\begin{table}[t]
\centering
\caption{Relative MSE of affine equivariance error, averaged over 50 trials.}
\label{tab:equivariance}
\begin{tabular}{lc}
\toprule
Transform & Rel.\ MSE \\
\midrule
Permutation         & $0$ \\
Translation         & $0$ \\
Isotropic scaling   & $0$ \\
Anisotropic scaling & $0$ \\
Rotation            & $5 \times 10^{-4}$ \\
Full affine         & $1 \times 10^{-4}$ \\
\bottomrule
\end{tabular}
\end{table}
First, Proposition \ref{prop: permutation and affine equivariances} establishes the equivariance properties of log-density and score estimation; here $T$ and $S$ map a sample to the log-density and score of its \emph{underlying} density, so the transformed left-hand sides are evaluated for the pushforward law (see the Appendix). The proof is in the Appendix.
\begin{proposition}[Permutation and affine equivariance of log-density and score evaluation]\label{prop: permutation and affine equivariances}
Let $f$ be a differentiable density, and $X=(x_1,\dots,x_n)^T$ be its iid sample. Define
\[
T(X) := \begin{pmatrix} \log f(x_1) \\[-2pt] \vdots \\[-2pt] \log f(x_n) \end{pmatrix},
\quad
S(X) := \begin{pmatrix} \nabla \log f(x_1)^\top \\[-2pt] \vdots \\[-2pt] \nabla \log f(x_n)^\top \end{pmatrix}.
\]
Let $P\in\R^{n\times n}$ be a permutation matrix, $A\in\R^{d\times d}$ be invertible, $\mu\in\R^d$, and $\mathbf{1}\in\R^n$ be the vector of ones. Then
\begin{align}
T\big(PXA + \mathbf{1}\mu^\top\big) &= P\,T(X) - \log|\det A|\,\mathbf{1},\\
S\big(PXA + \mathbf{1}\mu^\top\big) &= P\,S(X)\,A^{-\top}.
\end{align}
\end{proposition}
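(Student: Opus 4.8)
The plan is to exploit the fact that the operators $T$ and $S$ are defined \emph{relative to the law of their input}: $T(Y)$ means ``the density of the sample $Y$, evaluated at each point of $Y$'', and $S(Y)$ is the corresponding $\nabla\log$ of that same law. So the first step would be to identify the law of the transformed sample. Writing $\varphi(x)=xA+\mu$ for the affine map acting row-wise on $\R^{1\times d}$, the transformed sample $Y:=PXA+\mu$ is i.i.d.\ from the pushforward of $f$ under $\varphi$ — permuting rows by $P$ does not change the law — and the change-of-variables formula gives $g(y)=|\det A|^{-1}\,f\big(\varphi^{-1}(y)\big)$ with $\varphi^{-1}(y)=(y-\mu)A^{-1}$, where $g$ denotes that pushforward density. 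This single formula drives both identities.

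For the density identity I would let $\sigma$ be the permutation encoded by $P$, so the $i$-th row of $Y$ is $\varphi(x_{\sigma(i)})$, and then plug $y=\varphi(x_{\sigma(i)})$ into the change-of-variables formula; since $\varphi^{-1}\big(\varphi(x_{\sigma(i)})\big)=x_{\sigma(i)}$, this gives $g\big(\varphi(x_{\sigma(i)})\big)=|\det A|^{-1}f(x_{\sigma(i)})$. Stacking over $i=1,\dots,n$ and observing that the reindexing $i\mapsto\sigma(i)$ is exactly left-multiplication by $P$ yields $T(Y)=P\,|\det A|^{-1}\,T(X)$.

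For the score identity I would take logarithms, $\log g(y)=\log f\big(\varphi^{-1}(y)\big)-\log|\det A|$, note that the constant $\log|\det A|$ drops out under $\nabla_y$, and apply the chain rule to the affine map $y\mapsto\varphi^{-1}(y)$, whose linear part is the constant matrix $A^{-1}$; this gives $\nabla_y\log g(y)=(A^{-1})^{\top}\nabla\log f\big(\varphi^{-1}(y)\big)=(A^{\top})^{-1}\nabla\log f\big(\varphi^{-1}(y)\big)$. Evaluating at $y=\varphi(x_{\sigma(i)})$ and stacking over $i$ with the same permutation bookkeeping as before then gives $S(Y)=P\,(A^{\top})^{-1}\,S(X)$.

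The calculus here is routine — one change of variables and one chain rule — so the hard part will not be analytic but a matter of bookkeeping: being explicit that $T$ and $S$ transport the underlying density along with the sample (rather than treating them as fixed functions of raw coordinates), and keeping the row-vs-column and left-vs-right conventions straight for how $P$ acts on the sample index and $A$ acts on the feature index, so that the transpose on $A$ and the sign of the exponent on $|\det A|$ land exactly where the statement places them. I expect reconciling those conventions to be the only place real care is needed.
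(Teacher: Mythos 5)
Your proposal is correct and follows essentially the same route as the paper's proof: identify the pushforward density $g = |\det A|^{-1}\, f\circ\varphi^{-1}$ of the transformed sample, evaluate it at the permuted, affinely mapped points to get the $T$ identity, and apply the chain rule to $\log g$ to get the $S$ identity, with the permutation handled as left-multiplication by $P$. The only delicate point — the row-versus-column convention that determines where the transpose on $A$ lands — is exactly the bookkeeping issue you flag, and the paper's own proof treats it with the same level of informality.
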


To capture these symmetries, we use the Transformer architecture without positional encodings, combined with an affine normalization layer (Figure \ref{fig:torch_forward}). The whitening step centers the data, decorrelates the features, and equalizes their variances by computing the inverse matrix square root of the regularized scatter matrix $S = X_c^\top X_c + \varepsilon I$. This transforms the input up to an arbitrary orthogonal (rotation/reflection) transformation in $O(d)$.

\begin{remark}[Approximate affine equivariance]\label{rem:approx-equivariance}
Whitening normalizes the sample covariance to the identity, so it handles \emph{every} invertible linear map exactly up to a residual orthogonal ($O(d)$) transformation; together with centering, this means translations are handled exactly and any affine map is reduced to an $O(d)$ rotation/reflection. This residual is resolved approximately: training on randomly oriented GMMs encourages the network to learn approximately $O(d)$-invariant features, closing the gap in practice. Table~\ref{tab:equivariance} confirms this empirically -- the relative MSE of the equivariance error under full affine transformations is on the order of $10^{-4}$.
\end{remark}

\begin{figure}
\centering
\begin{python}
def forward(self, X, Y):
    m = X.mean(0, keepdim=True)
    Xc, Yc = X - m, Y - m
    S = Xc.T @ Xc + eps * torch.eye(d)
    A = matrix_sqrt_inv(S)
    Xw, Yw = Xc @ A, Yc @ A
    log_dens, score = self._core(Xw, Yw)
    log_dens += torch.logdet(A)
    return log_dens, score @ A.T
\end{python}
\vspace{-1em}
\caption{The forward pass implementing affine equivariance. Here \texttt{self.\_core} is a standard Transformer encoder without positional encodings, and \texttt{matrix\_sqrt\_inv(S)} computes the matrix inverse square root $S^{-1/2}$ (not element-wise). The core outputs the log-density and score in whitened coordinates; the log-density is shifted by $\log\det A = \log\det(S^{-1/2})$ to account for the change of variables, and the score is mapped back by $A^\top$.}
\label{fig:torch_forward}
\end{figure}

\subsection{Attention and KDE}\label{sec:attention-kde}
We now establish a precise connection between the attention mechanism and Gaussian kernel smoothing. We state it for \emph{cross}-attention, in which a set of query points $Y$ attends over a set of context points $X$; self-attention is the special case $Y=X$ (Corollary~\ref{cor:self-attention-kde}). Unlike prior formulations that require $L^2$-normalized inputs \citep{ZhangEtAl2023D2LAttention}, the following result holds for \emph{arbitrary} input vectors. The proof follows from the polarization identity and is given in the Appendix.

\begin{proposition}[Cross-attention computes reweighted Gaussian kernel smoothing]\label{prop: attention computes KDE}
Let $X \in \R^{n_x\times d}$ with rows $x_1,\ldots,x_{n_x}$ (context/keys) and $Y \in \R^{n_y\times d}$ with rows $y_1,\ldots,y_{n_y}$ (queries). For any positive semi-definite $B\in\R^{d\times d}$, define the cross-attention matrix
\[
A_{ij} = \frac{\exp(y_i^\top B\, x_j)}{\sum_{k=1}^{n_x} \exp(y_i^\top B\, x_k)}.
\]
Then
\[
A_{ij}
=
\frac{w_j\,\exp\!\bigl(-\tfrac{1}{2}\|y_i - x_j\|_B^2\bigr)}
{\sum_{k=1}^{n_x} w_k\,\exp\!\bigl(-\tfrac{1}{2}\|y_i - x_k\|_B^2\bigr)},
\]
where $w_j = \exp\!\bigl(\tfrac{1}{2}\|x_j\|_B^2\bigr)$ and $\|z\|_B^2 = z^\top B\, z$.
\end{proposition}

\begin{corollary}[Attention recovers KDE on constant-norm context]\label{cor:kde}
If $\|x_j\|_B = c$ for all context tokens $j$ (e.g., $L^2$-normalized inputs with $B = h^{-2}I$), then $w_j$ is constant and cancels, giving $A_{ij} \propto \exp\!\bigl(-\|y_i - x_j\|_B^2/2\bigr)$, the normalized Gaussian kernel.
\end{corollary}

Proposition~\ref{prop: attention computes KDE} shows that a single attention head implements a \emph{reweighted} Gaussian kernel, the reweighting $w_j = \exp(\tfrac{1}{2}\|x_j\|_B^2)$ being the only obstruction to exact KDE. We now show that supplying each token with a single scalar feature -- its squared norm -- removes this obstruction, so that one residual cross-attention block (model width $d_{\mathrm{model}}\ge 2d+1$), followed by an affine readout of its output \emph{and} the per-query attention log-normalizer $\ell_i = \log\sum_j \exp(q_i^\top k_j)$, represents the \emph{exact} classical KDE score and log-density at arbitrary query points. This makes the connection between the architecture and nonparametric statistics fully constructive. The proof is in the Appendix.

\begin{proposition}[Cross-attention represents the KDE score and log-density]\label{prop:exact-kde-score}
Fix $h>0$ and let $K_h(y,x) = \exp(-\|y-x\|^2/(2h^2))$. Apply the fixed lift $z \mapsto [\,z,\ \|z\|^2\,]$ to every context token $x_j$ and query token $y_i$. A single residual cross-attention block (one head, exact softmax, no positional encodings, normalization, or feed-forward; model width $d_{\mathrm{model}}\ge 2d+1$), followed by an affine readout of the block output and the per-query log-normalizer $\ell_i := \log\sum_j \exp(q_i^\top k_j)$, maps any context $X \in \R^{n_x\times d}$ and queries $Y \in \R^{n_y\times d}$ to the exact KDE score and log-density of $\hat f_{h,X}$ at every query $y_i$:
\begin{align*}
\nabla_y \log \hat f_{h,X}(y_i) &= \frac{1}{h^2}\Bigl(\tfrac{\sum_j K_h(y_i,x_j)\,x_j}{\sum_j K_h(y_i,x_j)} - y_i\Bigr),\\
\log \hat f_{h,X}(y_i) &= \ell_i - \tfrac{\|y_i\|^2}{2h^2} - \log n_x - \tfrac{d}{2}\log(2\pi h^2).
\end{align*}
\end{proposition}

\begin{corollary}[Self-attention recovers KDE at the sample points]\label{cor:self-attention-kde}
Taking $Y=X$ in Propositions~\ref{prop: attention computes KDE} and~\ref{prop:exact-kde-score} recovers the corresponding self-attention statements: a single residual self-attention block over the lifted samples reproduces the normalized Gaussian KDE weights, the KDE score $\nabla \log \hat f_{h,X}(x_i)$, and the KDE density $\hat f_{h,X}(x_i)$.
\end{corollary}

Together, Propositions~\ref{prop: permutation and affine equivariances}--\ref{cor:self-attention-kde} establish that the proposed Transformer architecture is not a black box applied to density estimation---it is structurally aligned with the task: attention can implement classical KDE.

\subsection{Empirical Verification}
We verify whether the model learns kernel-like behavior in practice. To empirically verify the relationship between the KDE weights and attention scores, we visualize attention in Figure \ref{fig:attention visualization}. As expected, the Transformer learns to attend to nearby points, with one attention head attending over far away samples instead. The attention mechanism effectively learns a kernel-like behavior, with emergent head specialization. The learned features allow for much better scaling both in the sample size and dimension, compared to KDE. In section \ref{sec: attention visualization} of the Appendix we further visualize the attention scores of individual heads, and find an emergent behavior of head specialization. Specifically, head 1 specialized to look at far-away points, whereas heads 0, 2, and 5 specialized to look at close- and mid-range interactions. Finally, heads 3, 4, 6, and 7 specialized to look in specific directions. This emergent behavior further links the multi-head Transformer to kernel-based methods, but with flexible multi-scale kernels.

\subsection{Training Data}
We train the Transformer to approximate the density and score of Gaussian Mixture Models (GMMs) from i.i.d.\ samples. GMMs are chosen because they are dense in the space of smooth densities (in total variation, and -- with enough components -- in Fisher divergence for the scores; see Appendix~\ref{sec:gmm-generalization}), and both their densities and scores admit closed-form expressions. Training samples are generated on the fly (Algorithm~\ref{alg:gmm-training}). The objective combines the mean squared errors (MSE) of the predicted log-density and score through a convex weighting:
\begin{align}
    \mathcal{L}_{T} &= \frac1n \|T(X,Y) - \log f_X(Y)\|_2^2,\\
    \mathcal{L}_{S} &= \frac1n \|S(X,Y) - \nabla \log f_X(Y)\|_2^2\\
    \mathcal{L} &= \alpha \mathcal{L}_{T} + (1-\alpha) \mathcal{L}_{S}.
\end{align}

\begin{algorithm}[tb]
   \caption{GMM DataLoader}
   \label{alg:gmm-training}
\begin{algorithmic}
   \STATE {\bfseries Input:} batch size $B$, dimension $d$, sample sizes $n_x,n_y$, components $[k_{\min},k_{\max}]$
   \REPEAT
      \STATE Sample $k \in \{k_{\min},\dots,k_{\max}\}$
      \FOR{$b = 1$ {\bfseries to} $B$}
         \STATE Sample two random GMMs with $k$ components each
         \STATE Sample $X_b$ from the first, $Y_b$ from the second
         \STATE Compute $\log f_{X_b}(y)$ and $\nabla \log f_{X_b}(y)$ for $y \in Y_b$
      \ENDFOR
      \STATE {\bfseries Output:} $\big(X,\, Y,\, \log f_X(Y),\, \nabla \log f_X(Y)\big)$
   \UNTIL{training stops}
\end{algorithmic}
\end{algorithm}

\subsection{Model Variants and Implementation}
We explore several architectural variants aimed at extending density and score estimation beyond the observed samples $X$, enabling evaluation at arbitrary query points $Y$. To achieve this, we introduce a cross-attention mechanism that processes two complementary inputs: a context tensor encoding the observed samples and a query tensor specifying the target locations. Through this interaction, the model learns to relate queries to the statistical structure of the observed data, producing accurate estimates of both $f_X(y)$ and $\nabla \log f_X(y)$ for any $y \in Y$. The cross-attention layer thus serves as a learned nonparametric smoother, effectively generalizing density and score estimation across the input domain rather than being limited to the training samples.

Recognizing that density and score estimation are mathematically coupled -- linked through the gradient of $\log f$ -- we employ a unified architecture with a shared backbone and two specialized output heads. This joint formulation encourages mutual consistency between the two quantities and allows the model to leverage shared geometric and statistical features during training, improving both accuracy and sample efficiency. It also enables test-time training: because the two heads must satisfy $S(C,Q)_i = \nabla_{q_i} T(C,Q)_i$ (the score is the gradient of the predicted log-density at the query $q_i$, with the context $C$ held fixed), their disagreement defines a label-free consistency loss that adapts the model to out-of-distribution inputs at inference, with no ground-truth density or score required (Section~\ref{subsec:score-estimation}).


\section{Experiments}
\begin{figure}
    \centering
    \includegraphics[width=\linewidth]{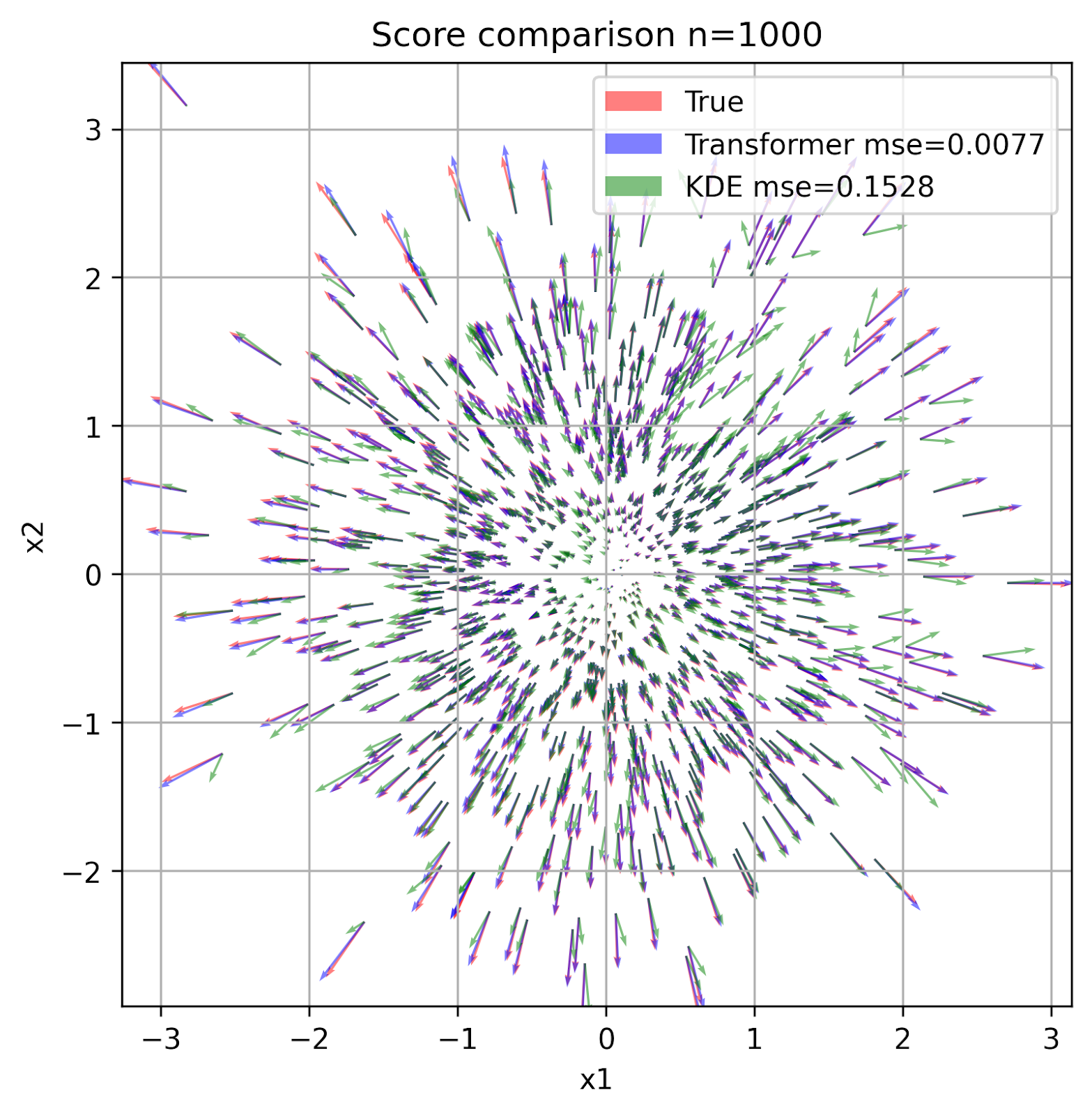}
    \caption{Score estimation comparison between Scott KDE \cite{Scott2015} and our transformer model. The transformer is more accurate, especially in the sparse regions. We plot the negated score for easier viewing.}
    \label{fig:score estimation kde vs tx quiver}
\end{figure}
We empirically investigate:
\begin{enumerate}
    \item score and density estimation accuracy vs.\ KDE and SD-KDE \cite{EpsteinEtAl2025SDKDE};
    \item scaling in sample size $n$ and dimension $d$;
    \item out-of-distribution generalization;
    \item downstream use as a plug-in score oracle in SD-KDE, entropy/Fisher information estimation, and deterministic SBTM-like solvers for Fokker--Planck-type PDEs.
\end{enumerate}
Since score matching methods such as DDPM and SDE-based models \cite{HoJainAbbeel2020DDPM,SongEtAl2021SDE} learn a score for a single distribution and must be retrained per target, the closest method sharing our \emph{nonparametric, plug-in} setting is KDE. We compare against sliced score matching in Appendix~\ref{sec:score_matching_comparison} and to multiple KDE bandwidth strategies (Scott, oracle, SD-KDE) throughout.

All experiments are performed on a single 48GB L40S GPU. The architecture is a permutation- and affine-equivariant Transformer with 4 encoder layers (hidden size 128, 8 heads, GELU, pre-normalization), no positional encodings, and roughly $800{,}000$ parameters. Unless stated otherwise we use batch size $32$, sample size $n=2048$, dropout $0.1$, and draw GMMs with 1--10 modes with means in $[-3,3]^d$ and diagonal covariances in $[0.2,1]^d$. A runtime comparison with KDE is provided in Appendix~\ref{sec:runtime}.

\subsection{Score estimation}\label{subsec:score-estimation}
\begin{figure}
    \centering
    \includegraphics[width=0.49\linewidth]{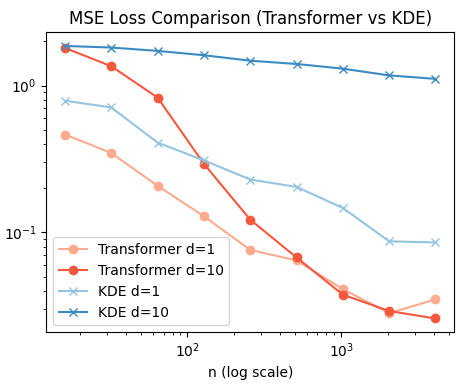}
    \includegraphics[width=0.49\linewidth]{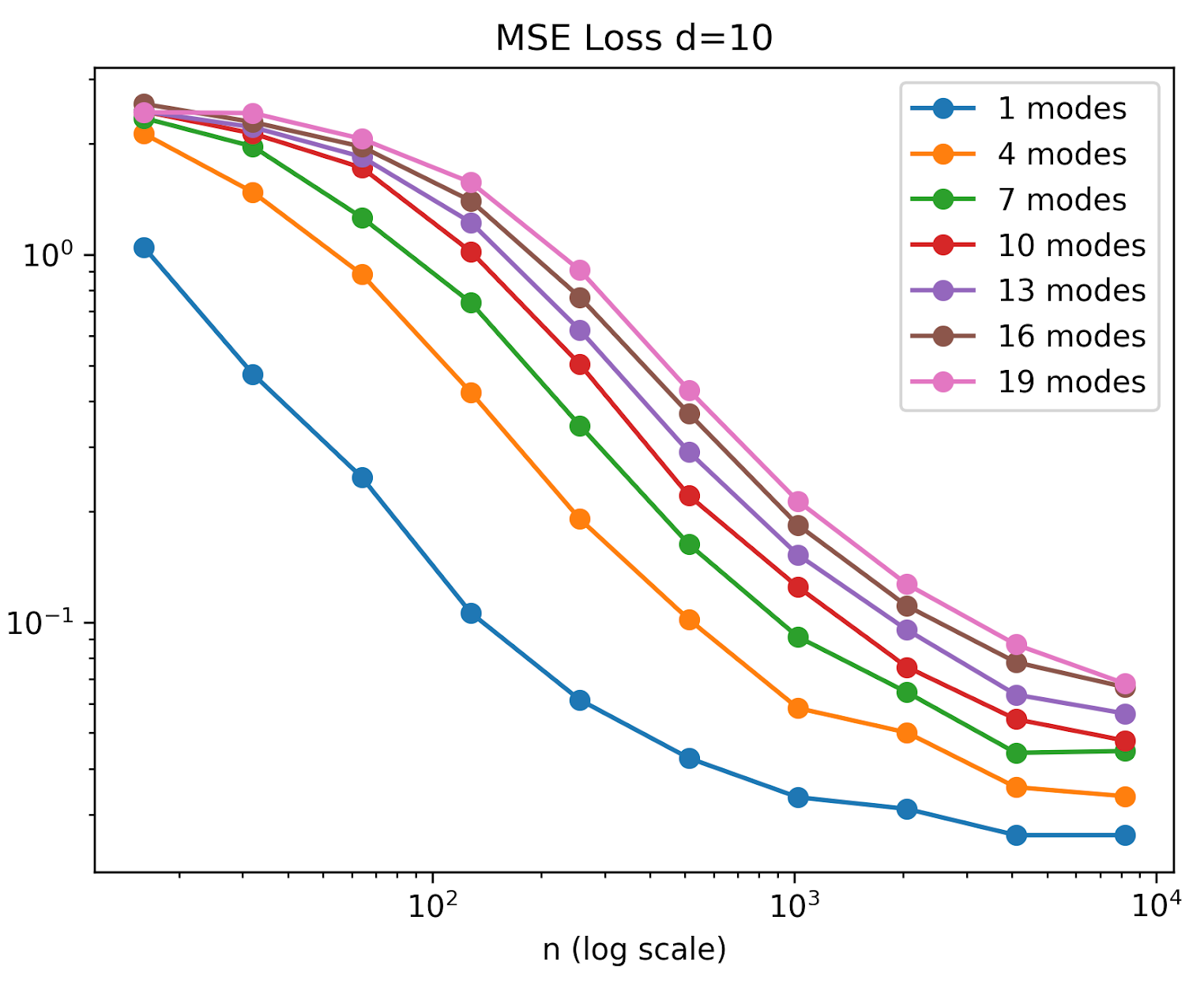}
    \caption{Left: MSE of score estimation in dimensions 1 and 10 on a 3-modal GMM using Transformer and KDE. The Transformer has excellent scaling in both dimension $d$ and the sample size $n$. Right: MSE of score estimation using the Transformer on GMMs with different numbers of modes. Despite being trained only on GMMs with 1-10 modes, and $n=2048$, the model exhibits excellent generalization.}
    \label{fig:MSE KDE and GMM modes}
\end{figure}
To visually compare our model to KDE, we first sample a 2d Gaussian, visualize the estimated scores using a quiver plot and compute the MSE score loss. Figure \ref{fig:score estimation kde vs tx quiver} demonstrates that even in dimension 2 and a KDE-favorable example of a unimodal Gaussian, our model performs better. The KDE score approximation is computed using 
\begin{align}
k(x_i, x_j) &= \exp\!\left(-\frac{1}{2}\sum_{\ell=1}^d\frac{(x_i^\ell - x_j^\ell)^2}{h_\ell^2}\right),\\
\widehat{s}(x_i) &= \frac{1}{h^2} \odot \left( \frac{\sum_{j} k(x_i, x_j) x_j}{\sum_{j} k(x_i, x_j)} - x_i \right),\label{eqn: KDE score}
\end{align}
where $h_\ell = \sigma_\ell n^{-1/(d+4)}$ is Scott's rule applied per coordinate with $\sigma_\ell$ the sample standard deviation along dimension $\ell$, and $\odot$ denotes element-wise multiplication.

For a quantitative comparison of score estimation against the KDE, we compute the Mean Squared Error (MSE) for dimensions $1$ and $10$ across several magnitudes of sample sizes $n$. See the left panel of Figure \ref{fig:MSE KDE and GMM modes}. The transformer model achieves much better accuracy than KDE even in dimension 1 and especially in dimension 10.

To examine the large-$n$ regime more closely, we train a larger DiScoFormer ($d_{\mathrm{model}}=256$, 8 heads, 6 layers, $150{,}000$ steps, score-only) separately in $d=2$ and $d=10$, with the context size $n$ sampled per batch as a random power of $2$ in $[2^{8},2^{14}]$. We evaluate score estimation on GMMs up to $n=2^{17}$; the largest three sizes are therefore strictly beyond any context size seen during training. Table~\ref{tab:large-n} reports the relative score MSE (normalized so a zero predictor equals $100\%$). The Transformer's error continues to decrease as $n$ grows and remains far below KDE throughout; KDE itself runs out of GPU memory on a single 48GB L40S past $n=2^{14}$.

\begin{table}[t]
\centering
\caption{Relative MSE (\%) of score estimation on GMMs. The Transformer was trained with $n\in[2^{8},2^{14}]$, so the last three rows probe extrapolation past the training range. KDE encounters OOM past $n=2^{14}$ on a single 48GB L40S.}
\label{tab:large-n}
\small
\setlength{\tabcolsep}{5pt}
\begin{tabular}{rcccc}
\toprule
& \multicolumn{2}{c}{$d=2$} & \multicolumn{2}{c}{$d=10$} \\
\cmidrule(lr){2-3}\cmidrule(lr){4-5}
$n$ & Ours & KDE & Ours & KDE \\
\midrule
$2^{8}$  & $14.67$ & $43.8$ & $7.49$ & $65.6$ \\
$2^{10}$ & $8.80$  & $33.6$ & $4.65$ & $61.3$ \\
$2^{12}$ & $7.24$  & $24.6$ & $3.32$ & $57.1$ \\
$2^{14}$ & $6.80$  & $17.2$ & $2.83$ & $52.9$ \\
$2^{16}$ & $5.35$  & OOM    & $2.80$ & OOM \\
$2^{17}$ & $5.41$  & OOM    & $2.74$ & OOM \\
\bottomrule
\end{tabular}
\end{table}

To judge the out-of-distribution generalization of our model, we compare the MSE loss on GMMs with 1-19 modes, while the training data consists of GMMs with 1-10 modes. The right panel of Figure \ref{fig:MSE KDE and GMM modes} shows that the MSE loss is monotone and stable in the number of modes. This indicates good generalization. Additionally, we evaluate on two non-Gaussian distributions: the Laplace distribution, which has a sharper peak and heavier tails than a Gaussian, and the Student-$t$ distribution ($\nu=3$), which has polynomial rather than exponential tail decay. Despite being trained only on GMMs, the model estimates the score successfully on both, as shown in Tables \ref{table: laplace} and \ref{tab:student-t}. Additionally, we can further improve the out-of-distribution performance using test-time training (TTT) \cite{SunEtAl2020TTT,GandelsmanEtAl2022TTTMAE}, using the consistency loss
\begin{equation}
    \mathcal{L}_{\text{con}} = \frac{1}{n}\sum_{i=1}^n \left\| S(C,Q)_i - \nabla_{q_i} T(C,Q)_i \right\|_2^2,
\end{equation}
where $T(C,Q)$ is the predicted log-density at queries $Q$ given context $C$, and $\nabla_{q_i} T$ -- the gradient with respect to the query coordinate, with the context held fixed -- is computed via automatic differentiation; at test time we set $C = \mathrm{stopgrad}(X)$ and $Q = X$. Figure \ref{fig:laplace plot} and Table \ref{tab:student-t} show that as few as 4 TTT steps are sufficient to improve performance on both distributions.

\begin{table}
\centering
\caption{MSE of score estimation on the 2D Laplace distribution.}\label{table: laplace}
\small
\setlength{\tabcolsep}{6pt}
\begin{tabular}{rcc}
\hline
$n$ & KDE & Transformer \\
\hline
512  & 0.3810 & \textbf{0.3598} \\
1024 & 0.3305 & \textbf{0.2992} \\
2048 & 0.2990 & \textbf{0.2756} \\
4096 & 0.2650 & \textbf{0.2597} \\
\hline
\end{tabular}
\end{table}

\begin{table}[t]
\centering
\caption{MSE of score estimation on the 2D Student-$t$ distribution ($\nu=3$).}
\small
\label{tab:student-t}
\begin{tabular}{rccccc}
\toprule
$n$ & KDE & No TTT & TTT 4 & TTT 6 & TTT 8 \\
\midrule
128  & 0.1515 & 0.1980 & 0.1676 & 0.1569 & \textbf{0.1450} \\
256  & 0.1206 & 0.1119 & 0.0956 & 0.0908 & \textbf{0.0897} \\
512  & 0.0916 & 0.0574 & 0.0488 & \textbf{0.0485} & 0.0514 \\
1024 & 0.0812 & 0.1023 & 0.0806 & 0.0768 & \textbf{0.0765} \\
\bottomrule
\end{tabular}
\end{table}

\begin{figure}
    \centering    \includegraphics[width=0.9\linewidth]{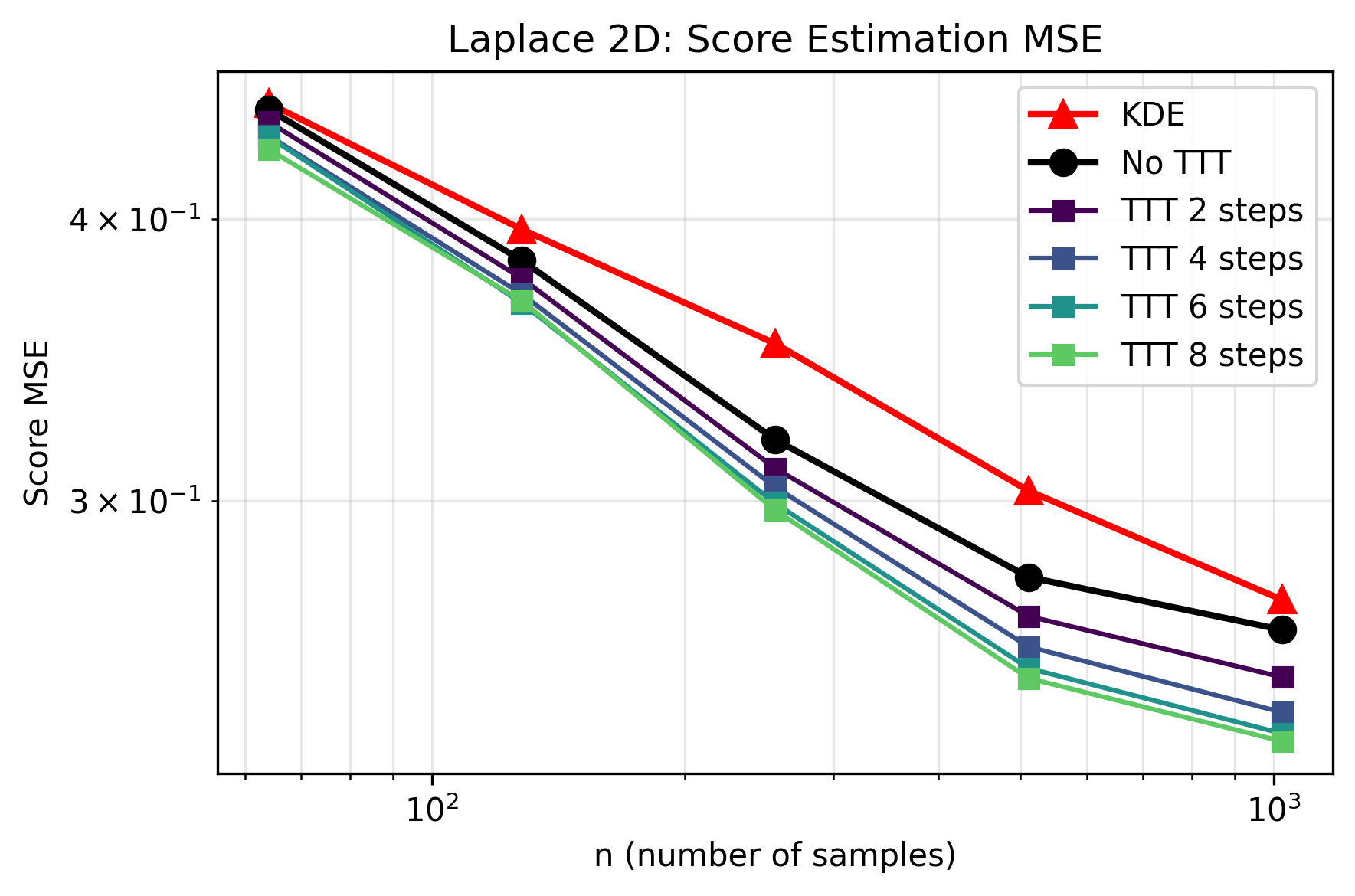}
    \caption{MSE of score estimation on the Laplace distribution. Test-time training (TTT) improves the out-of-distribution generalization.}
    \label{fig:laplace plot}
\end{figure}

\subsection{High-Dimensional Estimation}

\begin{table}[t]
\centering
\caption{Score and log-density estimation in $d=100$ on random 2-component diagonal-covariance GMMs ($n=2048$ context, $256$ queries). KDE baselines use two bandwidth strategies; ``Oracle $h$'' is the best fixed bandwidth found by grid search. DiScoFormer ($d_{\mathrm{model}}=256$, 8 heads, 6 layers) is evaluated at 150k training steps.}
\label{tab:d100}
\small
\setlength{\tabcolsep}{4pt}
\begin{tabular}{lccc}
\toprule
Method & Score MSE & Log-density MSE \\
\midrule
Scott KDE            & $1.155$  & $967$ \\
Oracle $h$ KDE       & $1.090$  & $781$ \\
\midrule
DiScoFormer          & $\mathbf{0.167}$ & $\mathbf{20.8}$ \\
\bottomrule
\end{tabular}
\end{table}

To test scalability beyond the moderate dimensions ($d \leq 10$) used in the main experiments, we train a larger DiScoFormer in $d = 100$. Table~\ref{tab:d100} reports the results. DiScoFormer achieves a $6.5\times$ lower score MSE and a $37.5\times$ reduction in log-density MSE compared to the best KDE variant. This indicates that, in this high-dimensional regime, DiScoFormer scales far more gracefully than kernel methods, which are acutely affected by the curse of dimensionality.

\subsection{Ablation: Effect of Whitening}

\begin{table}[t]
\centering
\caption{Ablation: effect of whitening on score and density estimation ($d=1$). ID and OOD use different log-uniform meta-distributions over GMM location and scale parameters; OOD scales are well beyond the training range.}
\label{tab:whitening-ablation}
\small
\setlength{\tabcolsep}{3pt}
\begin{tabular}{llcc}
\toprule
 & & Score MSE & Log-density MSE \\
\midrule
\multirow{2}{*}{ID}  & Whitening    & $\mathbf{0.107}$ & $\mathbf{0.058}$ \\
                      & No whitening & $0.118$ & $0.066$ \\
\midrule
\multirow{2}{*}{OOD} & Whitening    & $\mathbf{0.020}$ & $\mathbf{0.123}$ \\
                      & No whitening & $1.136$ & $1.593$ \\
\bottomrule
\end{tabular}
\end{table}

To isolate the contribution of the whitening layer, we trained two identical DiScoFormer models that differed only in whether whitening was applied (in $d=1$, whitening reduces to centering and variance scaling). Both were trained on GMMs whose location and scale parameters varied over multiple orders of magnitude. Table~\ref{tab:whitening-ablation} reports the results. In-distribution, whitening provides a modest improvement but out-of-distribution --- at scales well beyond the training range --- the model without whitening catastrophically fails. This validates the theoretical motivation: whitening provides exact scale equivariance and approximate affine equivariance, enabling robust generalization to unseen scales.

\subsection{Relative Fisher Information}

\begin{figure}
    \centering
    \includegraphics[width=0.9\linewidth]{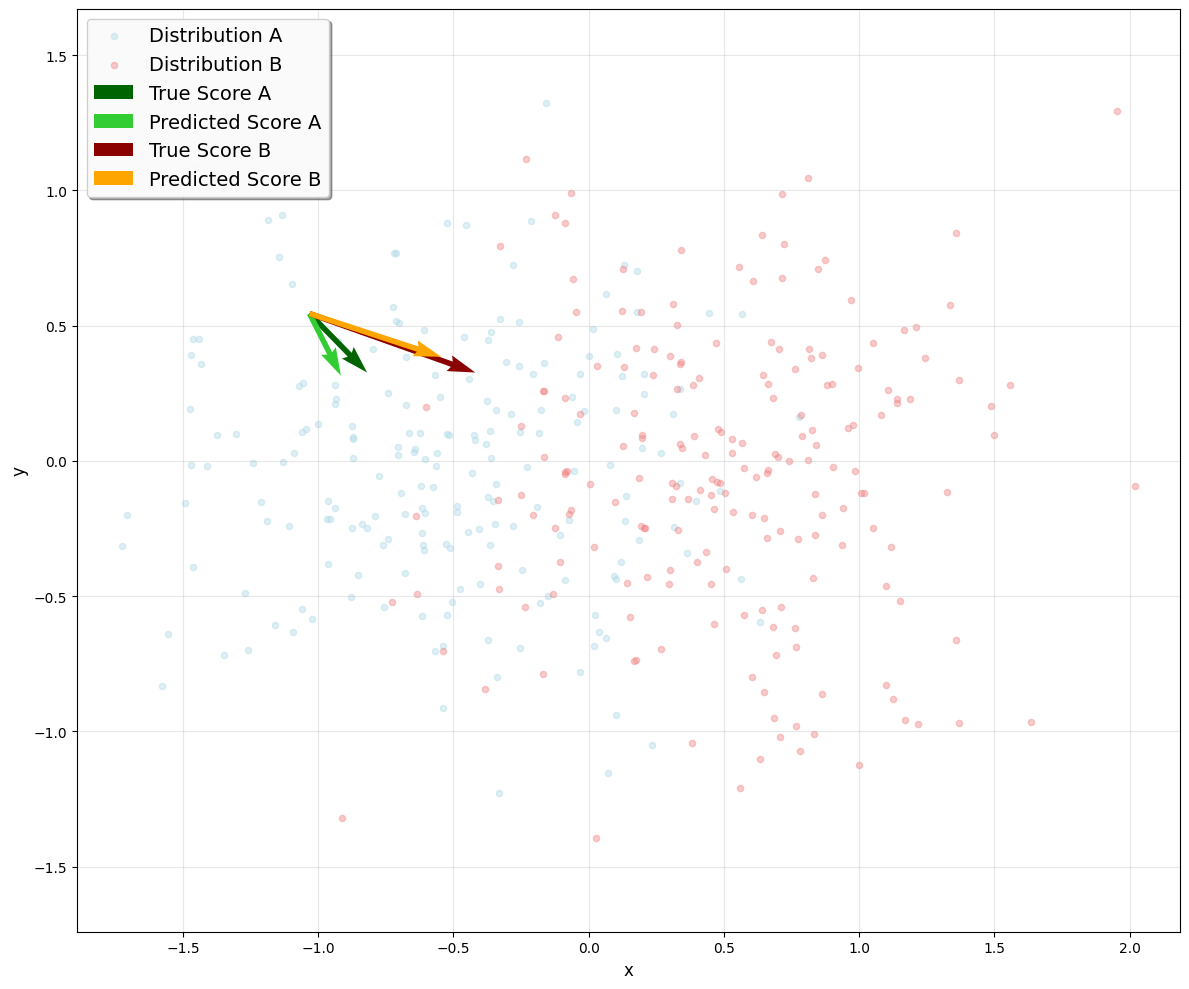}
    \caption{Computing relative Fisher information. Our model predicts $\nabla \log g(x_i)$ at query points $x_i$ via cross-attention with the samples $y_i\sim g$.}
    \label{fig:RFI}
\end{figure}

Relative Fisher information compares two densities through their score fields:
\[
\mathrm I(f\|g)
=
\E_{x\sim f}
\|\nabla\log f(x)-\nabla\log g(x)\|^2.
\]
Given samples \(X=\{x_i\}_{i=1}^{n_x}\sim f\) and
\(Y=\{y_j\}_{j=1}^{n_y}\sim g\), DiScoFormer evaluates
\(\nabla\log g(x_i)\) by using \(Y\) as context and \(X\) as queries.
This yields the Monte Carlo estimator
\[
\widehat{\mathrm I}(f\|g)
=
\frac1{n_x}
\sum_{i=1}^{n_x}
\|S(X,X)_i-S(Y,X)_i\|^2,
\]
where \(S(C,Q)_i\) denotes the score estimate at query \(q_i\) using context
sample \(C\). \cref{fig:RFI} visualizes this. KL divergence can be estimated the same way using the log-density head, $\widehat{\mathrm{KL}}(f\|g) = \frac{1}{n_x}\sum_{i=1}^{n_x}\big(T(X,X)_i - T(Y,X)_i\big)$, where $T(C,Q)_i$ is the log-density estimate at query $q_i$ using context $C$.

\subsection{Density Estimation via SD-KDE}
\begin{figure*}[t]
\centering
\includegraphics[width=\linewidth]{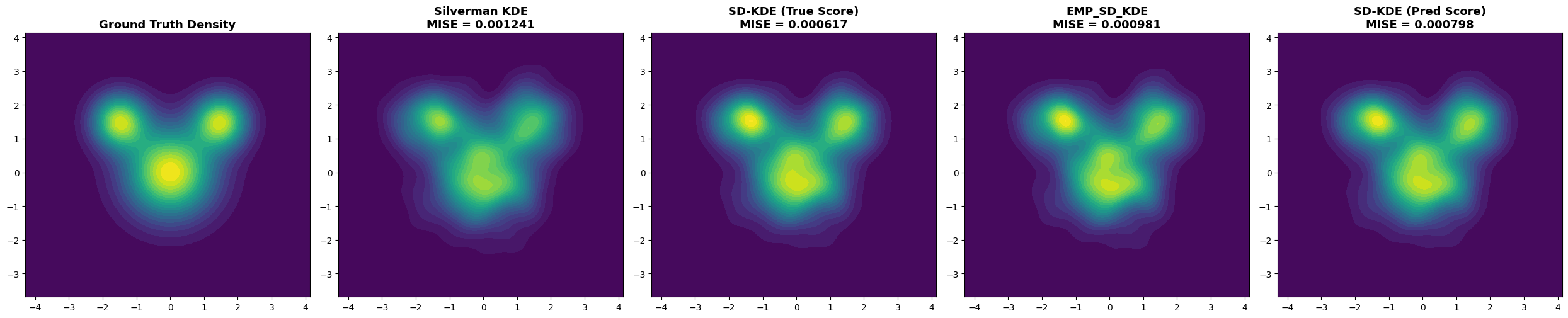}
\caption{Qualitative comparison: True density, Scott KDE, Emp-SD-KDE, and SD-KDE with our learned score. Learned-score SD-KDE is visibly less biased.}
\label{fig:sd_kde_overview}
\end{figure*}

\begin{figure}
\centering
\includegraphics[width=0.9\linewidth]{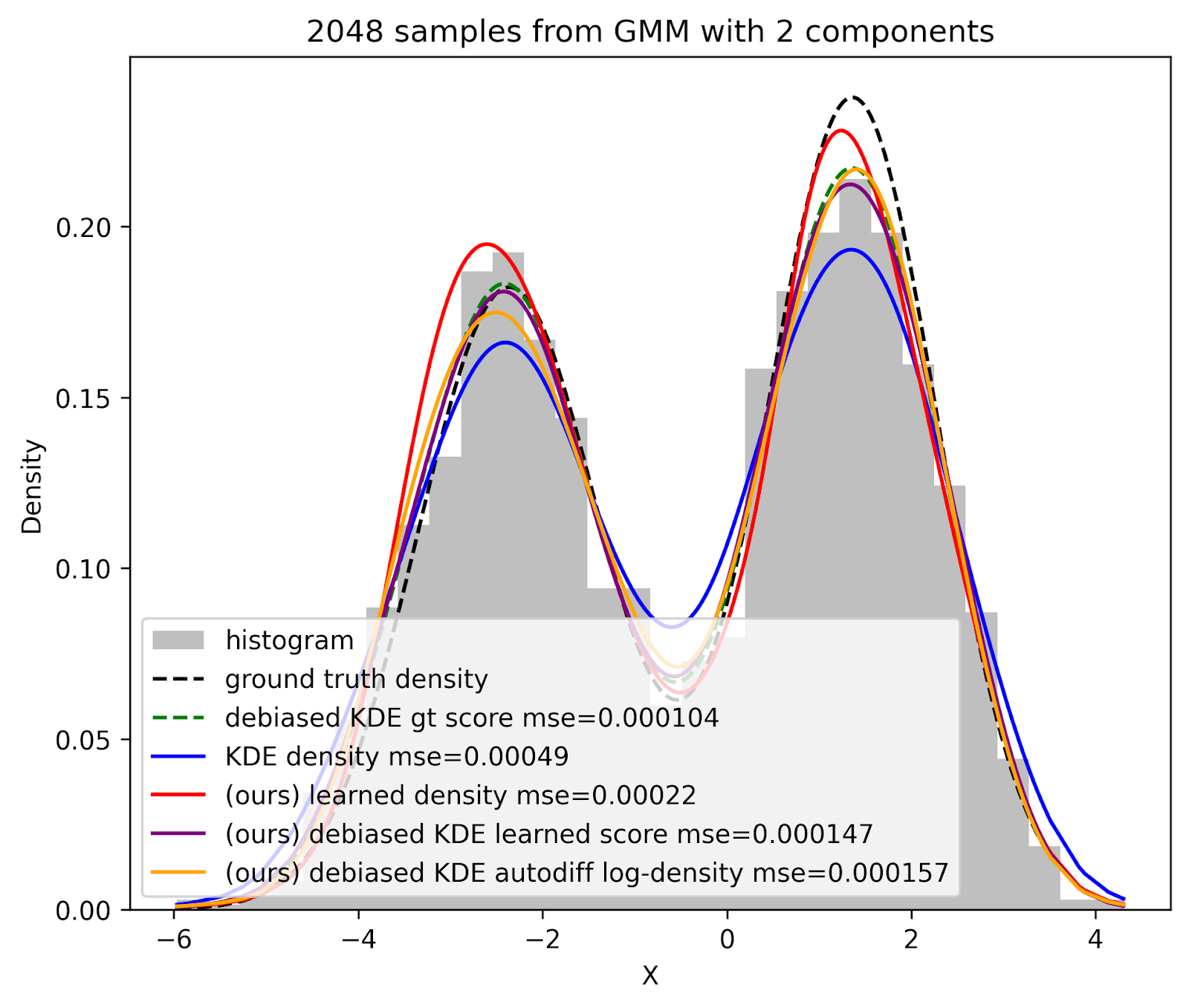}
\caption{1D bimodal GMM ($n=2048$): SD-KDE with the learned score best approximates the density by MSE.}
\label{fig:density_hist}
\end{figure}

\begin{figure}
\centering
\includegraphics[width=0.49\linewidth]{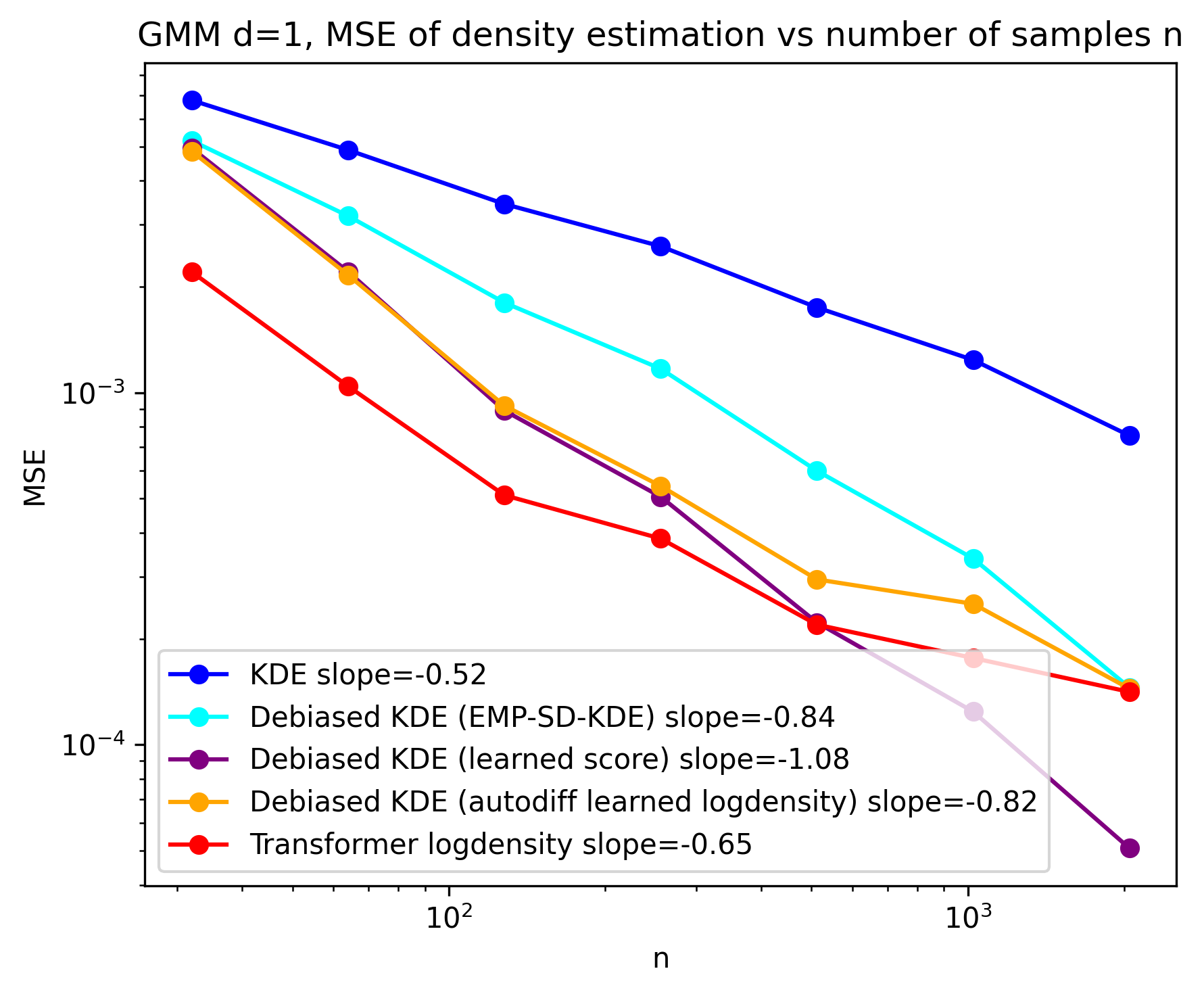}
\includegraphics[width=0.49\linewidth]{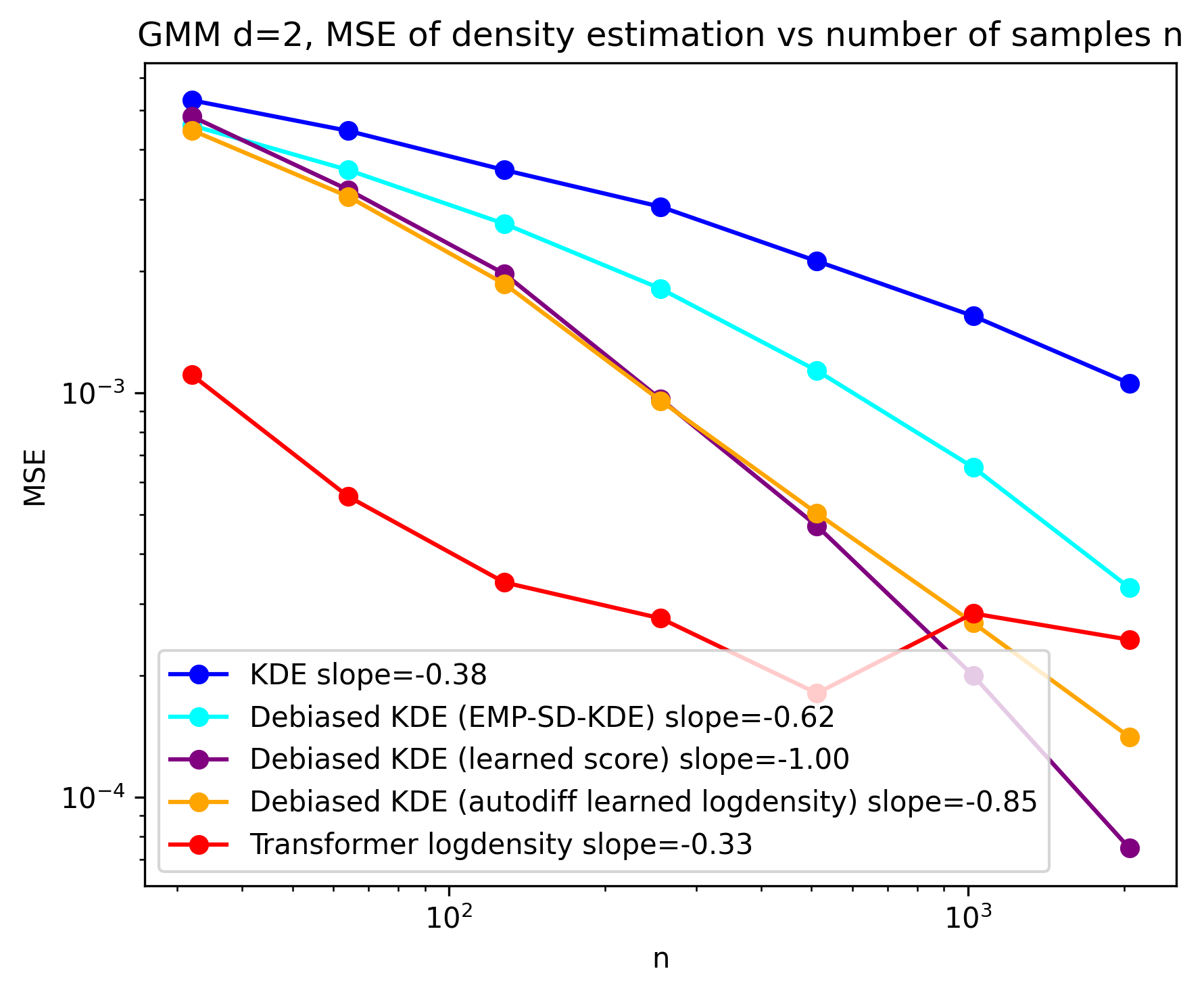}
\caption{MSE of density estimation in 1D (left) and 2D (right). SD-KDE with our learned score and the Transformer model show the best scaling. The Transformer was trained only at $n=2048$.}
\label{fig:density_mse}
\end{figure}

Knowledge of the score $\nabla \log f$ can be exploited by score-debiased KDE (SD-KDE) \cite{EpsteinEtAl2025SDKDE} to reduce the bias from $O(h^2)$ to $O(h^4)$. SD-KDE sharpens samples by moving them along the score before applying KDE: $X \mapsto X + \tfrac{h^2}{2}\,\nabla \log f(X)$, counteracting KDE's smoothing bias. In practice the true score is unavailable; we compare (i) plain Scott KDE, (ii) Emp-SD-KDE \cite{EpsteinEtAl2025SDKDE} which uses Scott KDE as the score, (iii) SD-KDE with our Transformer-predicted score, (iv) SD-KDE with the autograd score $\nabla_y T(X,Y)$ of the Transformer's learned log-density, and (v) the Transformer's direct density estimate $\exp T(X,Y)$.

Figure~\ref{fig:sd_kde_overview} shows that the learned score substantially improves density recovery over plain KDE. In 1D bimodal GMMs (Figure~\ref{fig:density_hist}), SD-KDE with the learned score best matches the truth. The direct density approach performs best at small $n$, but score-based methods scale better with $n$, as learning the score avoids the normalization constant $Z = \int_{\R^d} f$.

\subsection{Estimation of entropy and Fisher information}
Differential entropy $H(f)=-\E_f \log f$ and Fisher information $I(f) = \E_f \|\nabla \log f\|^2$ characterize the spread of a distribution. Plug-in estimates $\hat H = -\tfrac{1}{n}\sum_i T(X)_i$ and $\hat I = \tfrac{1}{n}\sum_i \|S(X)_i\|^2$ are computed directly from DiScoFormer's outputs. We compare against Scott KDE and Emp-SD-KDE on a 1D 3-mode GMM (entropy/Fisher computed by grid integration) and on $d=10$ Gaussians with random mean/covariance (analytical formulas). Figure~\ref{fig:entropy and fisher information} shows that DiScoFormer beats KDE already in $d=1$; in $d=10$ the learned density wins for entropy, and the learned score wins for Fisher information.

\begin{figure}
    \centering
    \includegraphics[width=0.49\linewidth]{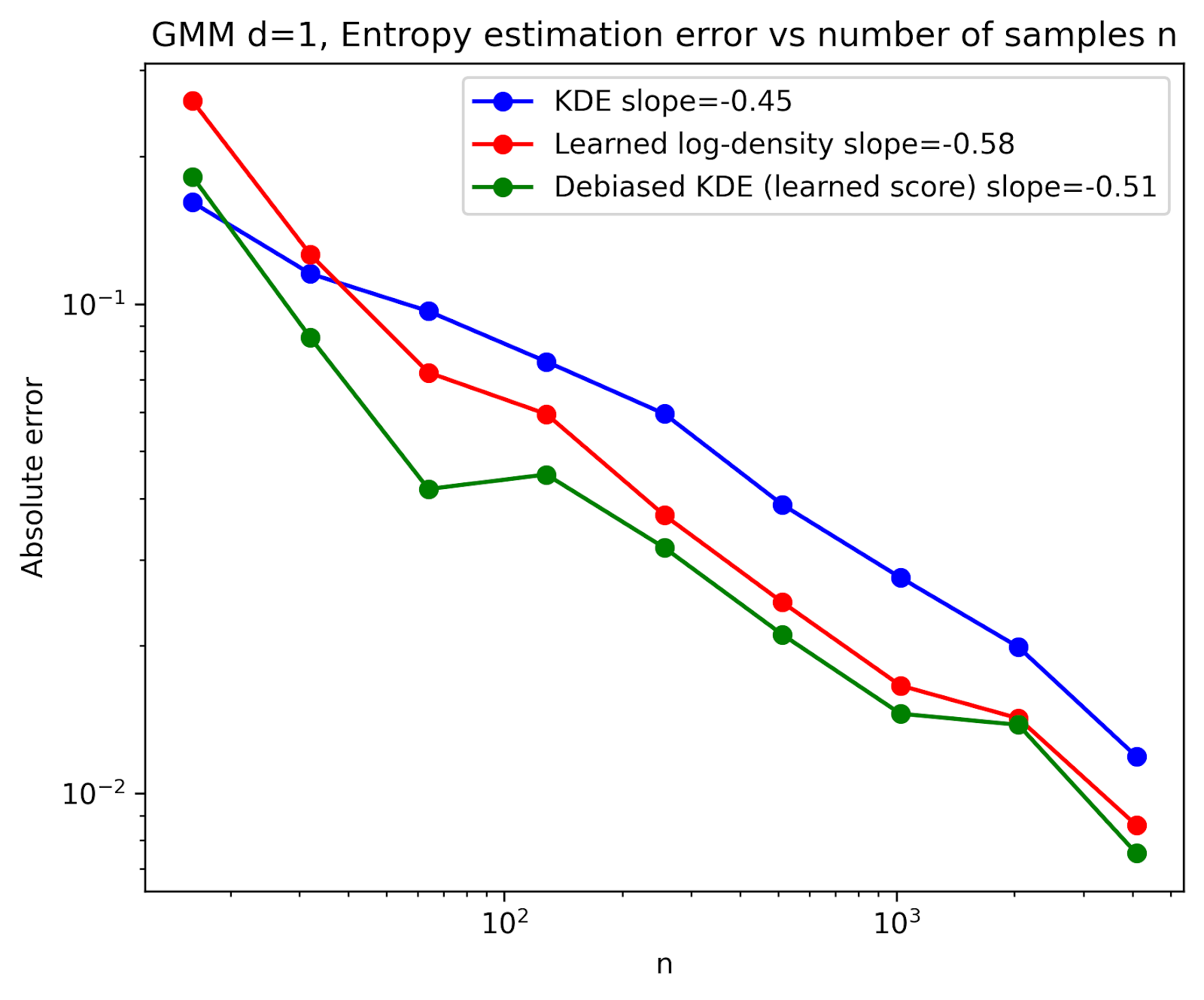}
    \includegraphics[width=0.49\linewidth]{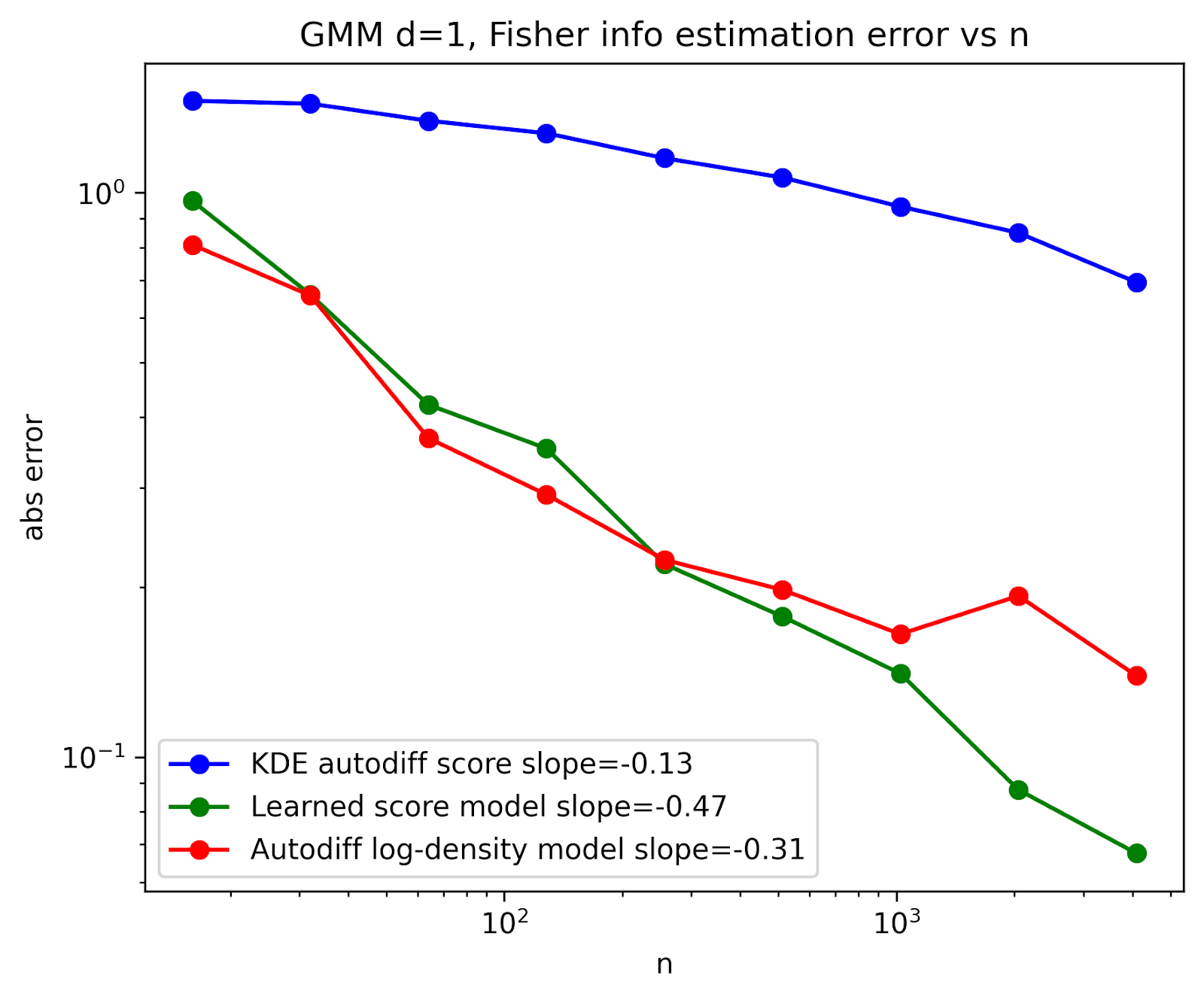}
    \includegraphics[width=0.49\linewidth]{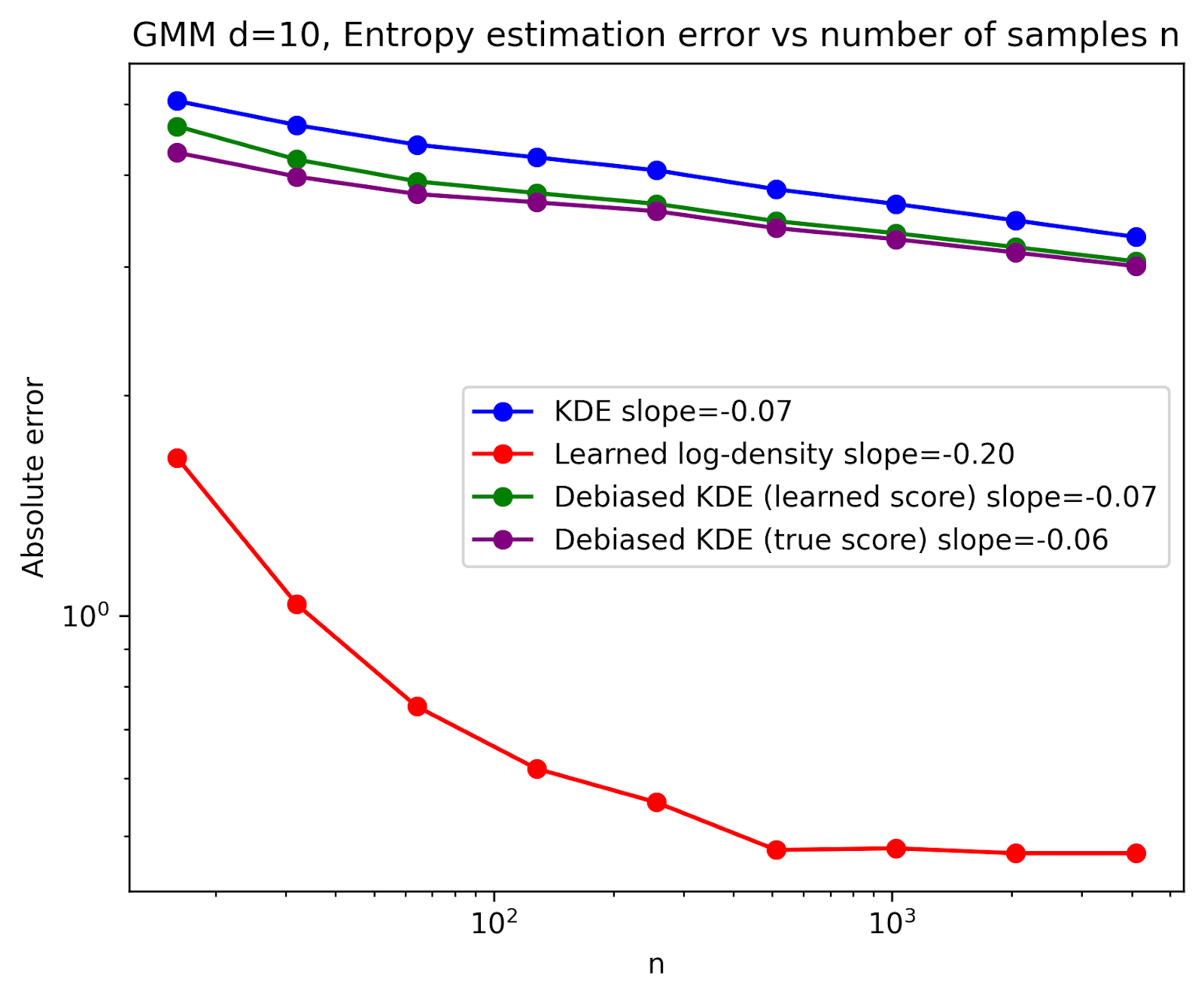}
    \includegraphics[width=0.49\linewidth]{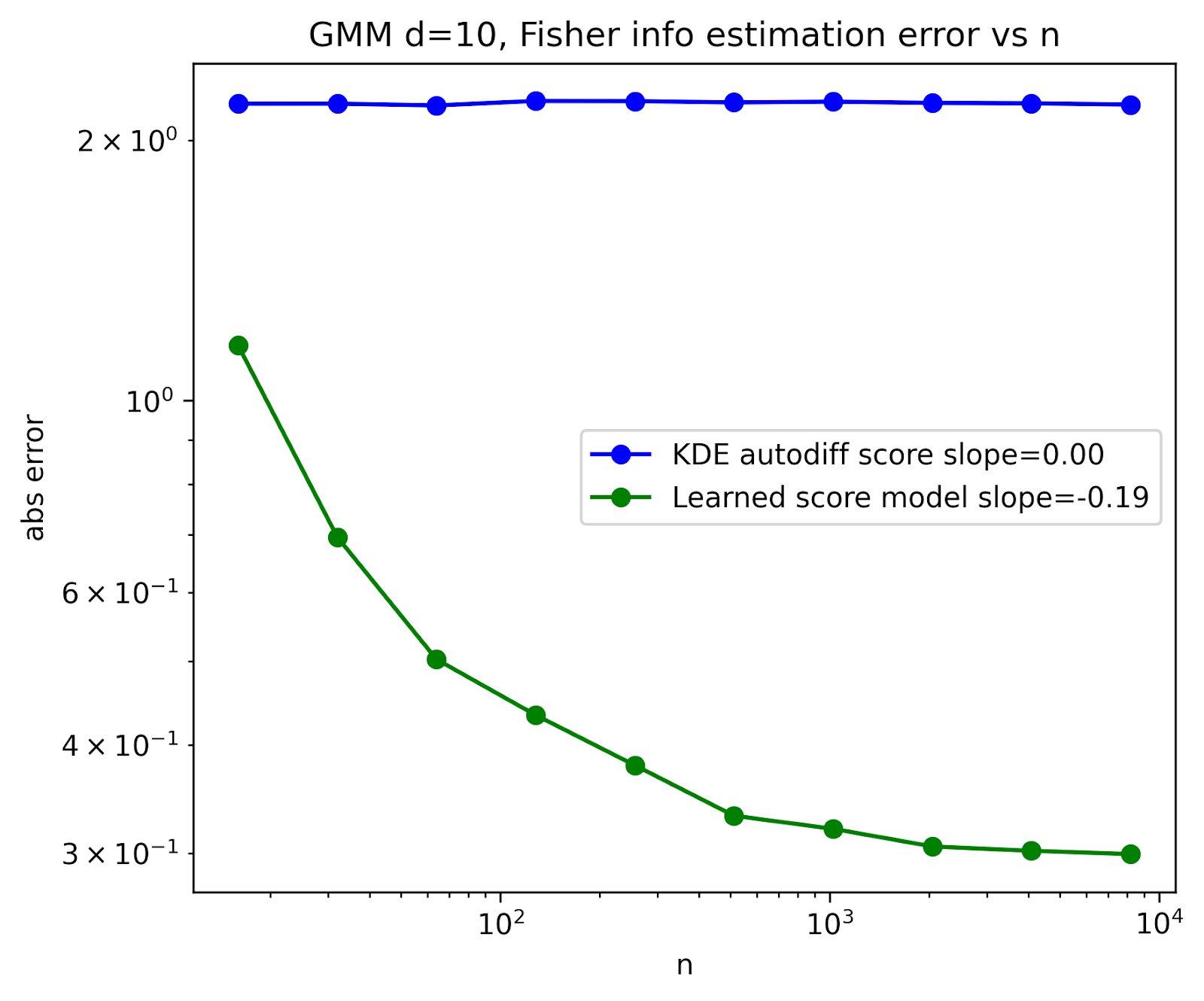}
    \caption{Comparison between the transformer model (learned) and Scott KDE and score-debiased KDE for estimation of differential entropy $H(f)$ and Fisher Information $I(f)$. Transformer's MSE is lower than that of the KDE approximation, even in dimension 1.}
    \label{fig:entropy and fisher information}
\end{figure}

\subsection{Plasma simulation}
The homogeneous Landau equation, a Fokker--Planck-type kinetic PDE used to simulate plasmas, can be rewritten \cite{CarrilloEtAl2020LandauParticle} as a transport equation with velocity $v(x) = -\int A(x-y)(\nabla \log f(x) - \nabla \log f(y))f(y)\,dy$ and solved by a particle method requiring an online score estimate. Score-Based Transport Modeling \cite{BoffiVandenEijnden2023SBTM,IlinHuWang2025TransportFPL,IlinSushkoHu2025ScoreDeterministic,IlinHu2026VlasovMaxwellLandau} fits a neural network via score matching at each step; this is accurate but expensive because of the on-the-fly retraining. We reproduce experiments 5.3 and 5.4 of \cite{IlinHuWang2025TransportFPL} using DiScoFormer as a pre-trained score oracle, with \emph{no} training during the simulation. Figure~\ref{fig:landau} shows that the Transformer matches the analytic covariance well, while the KDE-based solver struggles.

\begin{figure}
    \centering
    \includegraphics[width=1.05\linewidth]{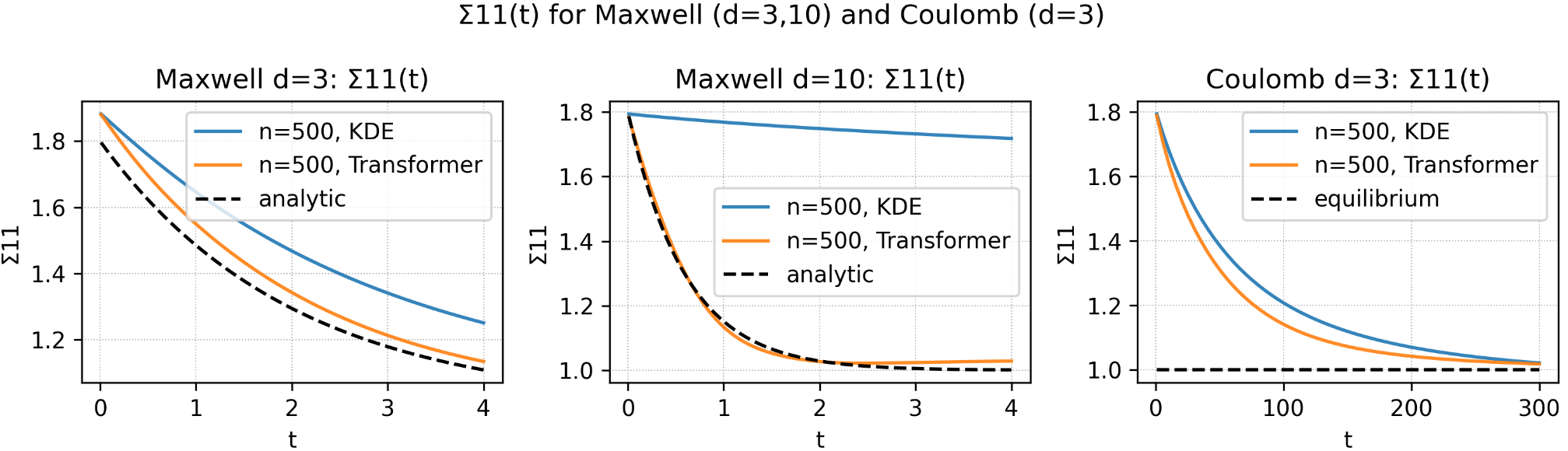}
    \caption{Comparison between the trained Transformer model and Scott KDE on the task of numerically solving the homogeneous Landau equation. We plot the first entry of the covariance matrix $\Sigma_{1,1}(t)$ of the numerical solutions and ground truth, when known. The left two panels use Maxwell collisions, while the last panel shows Coulomb collisions. The Transformer outperforms KDE, and is comparable in quality to SBTM in \cite{IlinHuWang2025TransportFPL}.}
    \label{fig:landau}
\end{figure}





\section{Conclusion}
We introduced DiScoFormer, a permutation- and affine-equivariant Transformer that jointly estimates density and score from i.i.d.\ samples in a single forward pass. Unlike score matching, which learns $\R^d\!\to\!\R^d$ for one distribution, DiScoFormer learns a sequence-to-sequence operator that generalizes across distributions. We proved that a single self-attention layer, on squared-norm-lifted inputs, exactly represents the KDE score, and confirmed empirically that individual heads learn multi-scale kernel-like behaviors. DiScoFormer outperforms KDE across dimensions and sample sizes, and serves as a plug-in oracle for score-debiased KDE, entropy and Fisher information estimation, and Fokker--Planck-type PDEs.

\paragraph{Limitations.}
(i) \emph{Training family.} We train exclusively on GMMs with up to 10 modes; Theorem~\ref{thm:gmm-generalization} bounds the gap to any smooth target by its best $K$-component GMM approximation, but distributions far from GMMs require retraining or finetuning. (ii) \emph{Rotation equivariance.} Whitening provides exact equivariance under translation and scaling and only \emph{approximate} rotation invariance via training augmentation. (iii) \emph{Consistency.} Unlike KDE with a well-chosen bandwidth, DiScoFormer has no proven asymptotic guarantees; closing this gap is an interesting open question.

\paragraph{Acknowledgments.}
CPU and GPU computing was in part done using the UW Research Computing Club funded from the UW Student Technology Fee Committee. GPU computing was in part done on UWIT's GPU cluster Tillicum. We also gratefully acknowledge the use of CPU and GPU resources provided by the UW Department of Applied Mathematics.

\section*{Impact Statement}
This paper advances nonparametric density and score estimation, a foundational problem in statistics and machine learning. The methods we develop are general-purpose tools whose downstream uses---generative modeling, Bayesian inference, and the numerical solution of kinetic equations---inherit the broader societal consequences already associated with those areas; for example, improved score estimators contribute to generative models that carry well-documented risks around synthetic media and misuse. These consequences are not specific to our contribution, and we do not feel that any must be singled out here. We see no ethical concerns that are unique to this work beyond those generally attendant to advancing the field of machine learning.

\clearpage
\bibliography{main}
\bibliographystyle{icml2026}

\newpage
\appendix
\onecolumn
\section{Proofs}

\begin{proposition}[Permutation and affine equivariance of log-density and score evaluation]
Let $f:\R^d \to (0,\infty)$ be a density, and for
\[
X = \begin{pmatrix} x_1 \\[-2pt] \vdots \\[-2pt] x_n \end{pmatrix} \in \R^{n\times d}
\]
define
\[
T(X) := \begin{pmatrix} \log f(x_1) \\[-2pt] \vdots \\[-2pt] \log f(x_n) \end{pmatrix} \in \R^{n},
\qquad
S(X) := \begin{pmatrix} \nabla \log f(x_1)^\top \\[-2pt] \vdots \\[-2pt] \nabla \log f(x_n)^\top \end{pmatrix} \in \R^{n\times d}.
\]
Let $P\in\R^{n\times n}$ be a permutation matrix, $A\in\R^{d\times d}$ be invertible, $\mu\in\R^d$, and $\mathbf{1}\in\R^n$ be the vector of ones. Then
\begin{align}
T\big(PXA + \mathbf{1}\mu^\top\big) &= P\,T(X) - \log|\det A|\,\mathbf{1}, \\
S\big(PXA + \mathbf{1}\mu^\top\big) &= P\,S(X)\,A^{-\top}.
\end{align}
\end{proposition}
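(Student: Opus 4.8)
The plan is to decouple the permutation from the affine transformation, handle the permutation by a one-line rowwise argument, and reduce the affine part to the change-of-variables formula for densities together with the chain rule. Since $T$ and $S$ are defined componentwise — the $i$-th output coordinate depends only on the $i$-th row of the input — left-multiplication by a permutation matrix $P$ merely reindexes the rows, so $T(PZ)=P\,T(Z)$ and $S(PZ)=P\,S(Z)$ for any admissible $Z$. Moreover $P(XA+\mu)=PXA+\mu$, because $\mu$ is broadcast along the all-ones vector, which $P$ fixes. Hence $T(PXA+\mu)=P\,T(XA+\mu)$ and $S(PXA+\mu)=P\,S(XA+\mu)$, and it remains to establish \eqref{eq:affine-T} and \eqref{eq:affine-S} with $P=I$.

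The conceptual point is the meaning of the left-hand sides: when $X$ is an i.i.d.\ sample of $f$, the matrix $XA+\mu$ is an i.i.d.\ sample of the pushforward density $g:=\phi_\#f$ with $\phi(x)=xA+\mu$, so $T(XA+\mu)$ and $S(XA+\mu)$ are to be read as $g$ and $\nabla\log g$ evaluated at the transformed points. The change-of-variables formula gives $g(y)=|\det A|^{-1}f\!\big((y-\mu)A^{-1}\big)$. Substituting $y=x_iA+\mu$ collapses the argument to $x_i$ and yields $g(x_iA+\mu)=|\det A|^{-1}f(x_i)$ for every $i$; stacking over $i$ and reattaching $P$ gives \eqref{eq:affine-T}.

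For the score, I would take logarithms, $\log g(y)=-\log|\det A|+\log f\!\big((y-\mu)A^{-1}\big)$, and differentiate in $y$: the constant term drops and the chain rule produces $\nabla\log g(y)=(A^{-1})^{\top}\nabla\log f\!\big((y-\mu)A^{-1}\big)$, taking gradients as column vectors (in the row convention the affine factor appears on the right, as in the \texttt{score @ A.T} step of the forward pass). Evaluating at $y=x_iA+\mu$ again collapses the argument to $x_i$, so $\nabla\log g(x_iA+\mu)=(A^{\top})^{-1}\nabla\log f(x_i)$; stacking and reattaching $P$ gives \eqref{eq:affine-S}. Differentiability and strict positivity of $f$ are used precisely to make $\log f$ and its gradient well defined, so no additional hypotheses are needed.

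I do not expect a genuine obstacle here — the argument is essentially bookkeeping. The one place that needs care is keeping the conventions consistent, namely whether gradients are row or column vectors and correspondingly whether the affine factor is $A^{-1}$ or $(A^{-1})^{\top}$ and on which side it multiplies, so that every matrix identity typechecks against the broadcasting implicit in ``$PXA+\mu$''; and the one idea the reader must grant, already isolated above, is that on the left-hand sides $T$ and $S$ refer to the pushforward density rather than to $f$ itself.
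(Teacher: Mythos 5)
Your proposal is correct and follows essentially the same route as the paper's proof: read $T$ and $S$ on the left-hand side as the pushforward density $g$ and its score, apply the change-of-variables formula and the chain rule, and evaluate at the transformed points, with your only (cosmetic) difference being that you split off the permutation first via $P(XA+\mu)=PXA+\mu$, whereas the paper carries the permutation $\sigma$ through both steps at once. The transpose bookkeeping in your score step is exactly as loose as in the paper's own proof: under the row convention $y=xA+\mu$ the column-vector chain rule actually gives $\nabla\log g(y)=A^{-1}\nabla\log f(x)$ (equivalently, row scores right-multiplied by $(A^{\top})^{-1}$), and the paper reconciles this with the stated factor $(A^{\top})^{-1}$ by silently switching to $y_i=Ax_{\sigma(i)}+\mu$ — the same convention switch your asserted $(A^{-1})^{\top}$ performs — so the content is right in both, but a fully rigorous write-up should fix one convention and state the score law accordingly.
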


\begin{proof}
Write $X = (x_1^\top,\dots,x_n^\top)^\top$ and let $\sigma$ be the permutation of $\{1,\dots,n\}$ corresponding to $P$, i.e.\ $(PX)_{i\cdot} = x_{\sigma(i)}^\top$. Define
\[
Y := PXA + \mathbf{1}\mu^\top \in \R^{n\times d}.
\]
Then the $i$-th row of $Y$ is
\[
y_i^\top = x_{\sigma(i)}^\top A + \mu^\top,
\qquad\text{equivalently } y_i = A^\top x_{\sigma(i)} + \mu.
\]

Consider the density obtained from $f$ by the affine change of variables $y = A^\top x + \mu$:
\[
f^{(\mu,A)}(y) := |\det A|^{-1} f\big((A^\top)^{-1}(y - \mu)\big).
\]
By construction,
\[
f^{(\mu,A)}(y_i) = |\det A|^{-1} f\big(x_{\sigma(i)}\big).
\]
Taking logarithms, $\log f^{(\mu,A)}(y_i) = \log f(x_{\sigma(i)}) - \log|\det A|$, so by the definition of $T$,
\[
T(Y) = \begin{pmatrix} \log f^{(\mu,A)}(y_1) \\[-2pt] \vdots \\[-2pt] \log f^{(\mu,A)}(y_n) \end{pmatrix}
= \begin{pmatrix} \log f(x_{\sigma(1)}) - \log|\det A| \\[-2pt] \vdots \\[-2pt] \log f(x_{\sigma(n)}) - \log|\det A| \end{pmatrix}
= P\,T(X) - \log|\det A|\,\mathbf{1},
\]
which proves the log-density equivariance.

For the score, differentiate $\log f^{(\mu,A)}$:
\[
\log f^{(\mu,A)}(y) = -\log|\det A| + \log f\big((A^\top)^{-1}(y-\mu)\big).
\]
By the chain rule,
\[
\nabla_y \log f^{(\mu,A)}(y)
= \big((A^\top)^{-1}\big)^\top \nabla_x \log f(x) \big|_{x = (A^\top)^{-1}(y-\mu)}
= A^{-1} \nabla \log f\big((A^\top)^{-1}(y-\mu)\big).
\]
Evaluating at $y_i = A^\top x_{\sigma(i)} + \mu$ gives
\[
\nabla_y \log f^{(\mu,A)}(y_i)
= A^{-1} \nabla \log f(x_{\sigma(i)}).
\]
Thus, stacking the gradients as rows (so $S(Y)\in\R^{n\times d}$),
\[
S(Y)
= \begin{pmatrix} \nabla_y \log f^{(\mu,A)}(y_1)^\top \\[-2pt] \vdots \\[-2pt] \nabla_y \log f^{(\mu,A)}(y_n)^\top \end{pmatrix}
= \begin{pmatrix} \nabla \log f(x_{\sigma(1)})^\top A^{-\top} \\[-2pt] \vdots \\[-2pt] \nabla \log f(x_{\sigma(n)})^\top A^{-\top} \end{pmatrix}
= P\,S(X)\,A^{-\top},
\]
using $\nabla_y \log f^{(\mu,A)}(y_i) = A^{-1}\nabla \log f(x_{\sigma(i)})$, which proves the score equivariance.
\end{proof}

\subsection*{Proof of Proposition~\ref{prop: attention computes KDE} (Cross-attention computes reweighted Gaussian kernel smoothing)}

\begin{proposition}[Restated]
Let $X \in \R^{n_x\times d}$ with rows $x_1,\ldots,x_{n_x}$ (context/keys) and $Y \in \R^{n_y\times d}$ with rows $y_1,\ldots,y_{n_y}$ (queries). For any positive semi-definite $B\in\R^{d\times d}$, define the cross-attention matrix
$A_{ij} = \exp(y_i^\top B\, x_j) / \sum_{k=1}^{n_x} \exp(y_i^\top B\, x_k)$.
Then
\[
A_{ij}
=
\frac{w_j\,\exp\!\bigl(-\tfrac{1}{2}\|y_i - x_j\|_B^2\bigr)}
{\sum_{k=1}^{n_x} w_k\,\exp\!\bigl(-\tfrac{1}{2}\|y_i - x_k\|_B^2\bigr)},
\]
where $w_j = \exp\!\bigl(\tfrac{1}{2}\|x_j\|_B^2\bigr)$ and $\|z\|_B^2 = z^\top B\, z$.
\end{proposition}

\begin{proof}
Since $B$ is positive semi-definite and symmetric, the polarization identity gives
\[
\|y_i - x_j\|_B^2 = (y_i - x_j)^\top B(y_i - x_j) = \|y_i\|_B^2 + \|x_j\|_B^2 - 2\,y_i^\top B\, x_j.
\]
Rearranging:
\[
y_i^\top B\, x_j = \tfrac{1}{2}\|y_i\|_B^2 + \tfrac{1}{2}\|x_j\|_B^2 - \tfrac{1}{2}\|y_i - x_j\|_B^2.
\]
Exponentiating both sides:
\[
\exp(y_i^\top B\, x_j) = \exp\!\bigl(\tfrac{1}{2}\|y_i\|_B^2\bigr)\;\exp\!\bigl(\tfrac{1}{2}\|x_j\|_B^2\bigr)\;\exp\!\bigl(-\tfrac{1}{2}\|y_i - x_j\|_B^2\bigr).
\]
Substituting into the softmax definition of $A_{ij}$, the factor $\exp(\tfrac{1}{2}\|y_i\|_B^2)$ depends only on the query index $i$ and appears in every term of the denominator sum. It therefore cancels between numerator and denominator, leaving
\[
A_{ij} = \frac{w_j\,\exp\!\bigl(-\tfrac{1}{2}\|y_i - x_j\|_B^2\bigr)}{\sum_{k=1}^{n_x} w_k\,\exp\!\bigl(-\tfrac{1}{2}\|y_i - x_k\|_B^2\bigr)},
\]
with $w_j = \exp(\tfrac{1}{2}\|x_j\|_B^2)$. Setting $Y=X$ recovers the self-attention statement.
\end{proof}

\subsection*{Proof of Proposition~\ref{prop:exact-kde-score} (Cross-attention represents the KDE score and log-density)}

\begin{proposition}[Restated]
Fix $h>0$. Lift each context token to the key/value input $\tilde x_j = [\,x_j,\ \|x_j\|^2,\ 0_d\,] \in \R^{2d+1}$ and each query token to the residual-stream input
\[
H_i^{(0)} = [\,y_i,\ \|y_i\|^2,\ 0_d\,] \in \R^{2d+1}.
\]
Consider a bare, unmasked, residual cross-attention block with one head, exact row-wise softmax, affine query projection, linear key/value/output projections, no positional encodings, no layer normalization, no dropout, and no FFN. Then there are weights such that an affine readout of the block output and of the per-query log-normalizer $\ell_i := \log\sum_{j} \exp(q_i^\top k_j)$ gives, at every query $i$ and exactly in real arithmetic,
\[
\nabla_y \log \hat f_{h,X}(y_i) = \frac{1}{h^2}\Bigl(\frac{\sum_j K_h(y_i,x_j)x_j}{\sum_j K_h(y_i,x_j)} - y_i\Bigr),
\qquad
\log \hat f_{h,X}(y_i) = \ell_i - \frac{\|y_i\|^2}{2h^2} - \log n_x - \frac{d}{2}\log(2\pi h^2).
\]
If all projections are required to be strictly linear, the same construction works after adding a constant coordinate $1$ to every token, with $d_{\mathrm{model}} \geq 2d+2$.
\end{proposition}

\begin{proof}
Initialize the query residual stream as $H_i^{(0)} = [\,y_i,\ \|y_i\|^2,\ 0_d\,]$ and the context stream as $\tilde x_j = [\,x_j,\ \|x_j\|^2,\ 0_d\,]$, both in $\R^{2d+1}$; the squared-norm coordinates are supplied by the fixed lift, so no nonlinear computation occurs inside the block. All projections act identically on each token, preserving permutation equivariance in both streams.

\paragraph{Query and key projections.}
Let $d_k = d+1$. Choose an affine query projection (on the query stream) and a linear key projection (on the context stream) so that, ignoring the standard attention scaling for the moment,
\[
q_i = \bigl[\,y_i/h,\;\;1\,\bigr], \qquad k_j = \bigl[\,x_j/h,\;\;-\|x_j\|^2/(2h^2)\,\bigr].
\]
Both read off the lifted coordinates: $q_i$ takes $y_i/h$ from the first $d$ entries and the constant $1$ from the query bias (or, in the strictly linear variant, from the constant coordinate); $k_j$ takes $x_j/h$ from the first $d$ entries and $-\|x_j\|^2/(2h^2)$ by scaling the squared-norm coordinate. If the attention convention uses $q_i^\top k_j/\sqrt{d_k}$, multiply the query projection by $\sqrt{d_k}$ so that the scaled logit equals the displayed value.

\paragraph{Attention weights are exact Gaussian kernels.}
The attention logit is
\[
a_{ij} := q_i^\top k_j = \frac{y_i^\top x_j}{h^2} - \frac{\|x_j\|^2}{2h^2}
= -\frac{\|y_i - x_j\|^2}{2h^2} + \frac{\|y_i\|^2}{2h^2},
\]
using $\|y_i - x_j\|^2 = \|y_i\|^2 + \|x_j\|^2 - 2y_i^\top x_j$. The last term is independent of $j$ and cancels in the row-wise softmax:
\[
\alpha_{ij} = \mathrm{softmax}_j(a_{ij}) = \frac{K_h(y_i,x_j)}{\sum_{k=1}^{n_x} K_h(y_i,x_k)}.
\]
These are exactly the normalized Gaussian KDE weights---with no reweighting and no assumption on the norms $\|y_i\|,\|x_j\|$.

\paragraph{Score from the value output.}
Choose the value and output projections so that the attention aggregate is written only into the final $d$ scratch coordinates of the query stream,
\[
\mathrm{Attn}_i = [\,0,\ 0,\ m_i\,], \qquad m_i = \sum_{j=1}^{n_x} \alpha_{ij}\, x_j = \frac{\sum_j K_h(y_i,x_j)\,x_j}{\sum_j K_h(y_i,x_j)}.
\]
After the residual connection (which carries the query stream), $H_i^{(1)} = H_i^{(0)} + \mathrm{Attn}_i = [\,y_i,\ \|y_i\|^2,\ m_i\,]$, so the linear readout $[\,y_i,\ \|y_i\|^2,\ m_i\,] \mapsto (m_i - y_i)/h^2$ returns the exact KDE score $\nabla_y \log \hat f_{h,X}(y_i)$.

\paragraph{Log-density from the normalizer.}
The same softmax forms the per-query log-normalizer $\ell_i = \log\sum_j \exp(a_{ij})$ internally. Since $a_{ij} = \log K_h(y_i,x_j) + \|y_i\|^2/(2h^2)$ with the second term constant in $j$,
\[
\sum_{j} \exp(a_{ij}) = \exp\!\Bigl(\tfrac{\|y_i\|^2}{2h^2}\Bigr) Z_i, \qquad Z_i := \sum_{j} K_h(y_i,x_j),
\]
so $\ell_i = \|y_i\|^2/(2h^2) + \log Z_i$. The Gaussian KDE is $\hat f_{h,X}(y_i) = \frac{1}{n_x}(2\pi h^2)^{-d/2}\, Z_i$, hence
\[
\log \hat f_{h,X}(y_i) = \log Z_i - \log n_x - \tfrac{d}{2}\log(2\pi h^2) = \ell_i - \frac{\|y_i\|^2}{2h^2} - \log n_x - \tfrac{d}{2}\log(2\pi h^2),
\]
an affine readout of the two available quantities $\ell_i$ and $\|y_i\|^2$; exponentiating returns the density. (The $-\log n_x$ bias is constant for fixed $n_x$; for variable $n_x$ it is itself a log-normalizer $\log\sum_j e^{0}$, obtainable from an all-zero logit row.)

Every step is linear or affine except the fixed squared-norm feature and the softmax, so no approximation is involved. Setting $Y=X$ ($y_i=x_i$) recovers the self-attention score and density at the sample points.
\end{proof}

\begin{remark}[Why the squared-norm feature is needed]
The lifted coordinate supplies the key-only quadratic term of the Gaussian log-kernel. Indeed,
\[
\log K_h(y_i,x_j) = [\,y_i/h,\ 1\,]^\top [\,x_j/h,\ -\|x_j\|^2/(2h^2)\,] - \frac{\|y_i\|^2}{2h^2},
\]
and the final term depends only on the query index $i$, so it is invisible to the row-wise softmax. The lift therefore does \emph{not} make the Gaussian RBF kernel a finite-dimensional inner-product (Mercer) kernel; it makes the attention \emph{logits} equal to the Gaussian log-weights up to row-wise constants. Without it, affine $Q,K$ projections produce logits
\[
(A y_i + a)^\top (B x_j + b) = y_i^\top A^\top B\, x_j + y_i^\top A^\top b + a^\top B x_j + a^\top b,
\]
whose query-only parts ($y_i^\top A^\top b$ and $a^\top b$) cancel in the softmax, leaving a bilinear term in $(y_i,x_j)$ plus an affine key-only term in $x_j$; the key-only quadratic $-\|x_j\|^2/(2h^2)$ required for an exact Gaussian cannot arise. The squared-norm coordinate is the key-side scalar feature that supplies it; note that for the score it is needed only on the \emph{context} tokens, since the query-side $\|y_i\|^2$ cancels in the softmax and the readout uses $y_i$ from the residual stream (the query lift $\|y_i\|^2$ is used only by the log-density readout above). Finally, among radial kernels $K(y,x)=\psi(\|y-x\|^2)$, the Gaussian is special: $\nabla_y\log K(y,x_j)$ is affine in $x_j$ exactly when $\log\psi$ is linear in $\|y-x_j\|^2$, which is what lets a linear value projection and linear readout recover the score from the normalized weights.
\end{remark}

\begin{remark}[The density needs the normalizer, not just the value output]
The log-density is not recoverable from the ordinary attention value output $m_i = \sum_j \alpha_{ij}x_j$ alone. For instance, in one dimension take context $x_1=0$, $x_2=a$, $x_3=-a$ and query $y=0$: symmetry gives $m = 0$ for every $a>0$, and the query's residual features $y,\|y\|^2$ are likewise independent of $a$; yet $Z(a) = 1 + 2e^{-a^2/(2h^2)}$, so $\hat f_{h,X}(y)$ varies with $a$. The normalized first moment thus carries the score but not the density scale, which is exactly the log-normalizer $\ell_i$. Moreover, at arbitrary query points the normalizer is not recoverable even from the \emph{full} row of normalized weights: unlike the self-attention special case $Y=X$ -- where the diagonal entry $\alpha_{ii} = K_h(x_i,x_i)/Z_i = 1/Z_i$ exposes $Z_i$ -- a query $y_i\notin X$ has no self-token with $K_h(y_i,y_i)=1$, so the softmax weights pin down $Z_i$ only up to the unknown scale by which they were normalized. The log-normalizer $\ell_i$ is therefore the clean object to expose for density estimation at arbitrary queries.
\end{remark}

\section{Supplementary Theoretical Results}\label{sec:supplementary-theory}

\subsection{Whitening as Adaptive Bandwidth Selection}\label{sec:whitening-kde-proof}

The whitening step is equivalent to performing KDE with a data-adaptive bandwidth.

\begin{proposition}[Whitening induces data-adaptive bandwidth]\label{prop:whitening-kde}
Let $\hat S = X_c^\top X_c \succ 0$ and $\tilde x_i = \hat S^{-1/2}(x_i - \hat\mu)$. Then the isotropic Gaussian KDE in the whitened space with scalar bandwidth $h$, pulled back to original coordinates, equals the anisotropic Gaussian KDE with full bandwidth matrix $H = h^2 \hat S$:
\[
\frac{1}{n}\sum_{i=1}^n \phi_{h^2 I}(\tilde y - \tilde x_i)
\quad\xrightarrow{\;\text{change of variables}\;}\quad
\frac{1}{n}\sum_{i=1}^n \phi_{h^2 \hat S}(y - x_i).
\]
\end{proposition}

\begin{proof}
Under the affine map $y = \hat S^{1/2}\tilde y + \hat\mu$, the whitened KDE density transforms as
$\hat f_h(y) = |\det \hat S^{-1/2}|\cdot \tilde f_h(\hat S^{-1/2}(y-\hat\mu))$.
The exponent becomes
$\|\hat S^{-1/2}(y-x_i)\|^2 = (y-x_i)^\top \hat S^{-1}(y-x_i)$,
and the normalizing constant satisfies $|\det \hat S|^{-1/2}\cdot h^{-d} = |h^2 \hat S|^{-1/2}$, yielding
$\hat f_h(y) = \frac{1}{n}\sum_{i=1}^n \phi_{h^2 \hat S}(y - x_i)$.
\end{proof}

\subsection{Generalization from GMM Training to Non-GMM Targets}\label{sec:gmm-generalization}

A natural question is why training exclusively on GMMs yields an estimator that generalizes to non-GMM distributions. The following result shows that the generalization gap depends on (i) how well a $K$-component GMM approximates the target in total variation, and (ii) how close the target functionals are --- in $L^2$ for density estimation, or in Fisher divergence for score estimation. Crucially, the bound does \emph{not} scale with the sequence length $N$, provided the trained network satisfies an $O(1/N)$ stability condition (stated precisely in Theorem~\ref{thm:gmm-generalization}). We emphasize that this is an \emph{assumption} on the learned model, not an automatic property of softmax attention, which can place $O(1)$ weight on a single token; in practice it is encouraged by the boundedness of the trained logits.

Throughout, the (population) \emph{risk} of $T_\theta$ on a density $g$ is the expected squared error of its first-token output against the target functional,
\[
\mathcal{R}(T_\theta, g) \;:=\; \E_{X\sim g^N}\big[\,\|T_\theta(X)_1 - f_g(x_1)\|^2\,\big],
\]
where $X=(x_1,\dots,x_N)$ has i.i.d.\ rows $x_i\sim g$, $T_\theta(X)_1$ is the model output at the first token, and the target functional is $f_g(x) = g(x)$ for density estimation or $f_g(x) = \nabla\log g(x)$ for score estimation. The GMM training assumption is that $\mathcal{R}(T_\theta, p)\le\varepsilon$ for every $K$-component GMM $p$.

\begin{theorem}[Generalization to non-GMM targets]\label{thm:gmm-generalization}
Let $T_\theta$ be a Transformer satisfying:
\begin{itemize}
\item \textbf{Boundedness:} outputs and targets bounded by $B$ on a compact domain $\mathcal{X}$.
\item \textbf{Stability:} for sequences differing in one context token, $\|T_\theta(X)_1 - T_\theta(X^{(j)})_1\| \leq L/N$ for $j \neq 1$.
\item \textbf{GMM training:} population risk $\leq \varepsilon$ on all $K$-component GMMs.
\end{itemize}
Let $g$ be a target density with best $K$-component GMM approximation $p^*$, with $D_{TV}(g,p^*) \leq \delta_{TV}$. Then:

\noindent\emph{(a) Density estimation:}
$\;\mathcal{R}(T_\theta, g) \leq 2\varepsilon + 2\|g - p^*\|_{L^2(g)}^2 + C\,\delta_{TV},$

\noindent\emph{(b) Score estimation:}
$\;\mathcal{R}(T_\theta, g) \leq 2\varepsilon + 2\,\mathcal{I}(g \| p^*) + C\,\delta_{TV},$

\noindent where $\mathcal{I}(g\|p^*) = \E_g[\|\nabla\log g - \nabla\log p^*\|^2]$ is the relative Fisher information and $C = 8B(B+L)$.
\end{theorem}

\begin{proof}
Decompose the risk via $(a+b)^2 \leq 2a^2 + 2b^2$:
$\mathcal{R}(T_\theta, g) \leq 2\,\E_{g^N}[\|T_\theta(X)_1 - f_{p^*}(x_1)\|^2] + 2\,\E_g[\|f_{p^*}(x_1) - f_g(x_1)\|^2]$.
The second term is $\|g-p^*\|_{L^2(g)}^2$ for density or $\mathcal{I}(g\|p^*)$ for score estimation.

For the first term, let $h(X) = \|T_\theta(X)_1 - f_{p^*}(x_1)\|^2$ (bounded by $4B^2$). We bound $|\E_{g^N}[h] - \E_{(p^*)^N}[h]|$ via a telescoping argument: couple $(x_j, x_j')$ marginally so that $\Pr(x_j \neq x_j') = D_{TV}(g,p^*)$. Changing the target token (position 1) costs at most $4B^2\,\delta_{TV}$ by the TV bound on bounded functions. Changing each context token $j \geq 2$ costs at most $\frac{2L \cdot 2B}{N}\,\delta_{TV}$ by the stability assumption. Summing over $N{-}1$ context tokens, the $N$ factors cancel:
\[
|\E_{g^N}[h] - \E_{(p^*)^N}[h]| \leq 4B^2\,\delta_{TV} + 4BL\,\delta_{TV} = C\,\delta_{TV}/2.
\]
Since $\E_{(p^*)^N}[h] \leq \varepsilon$ by assumption, we obtain $\E_{g^N}[h] \leq \varepsilon + C\,\delta_{TV}/2$, and the factor of 2 from the initial decomposition yields the stated bounds.
\end{proof}

For smooth targets $g \in \mathcal{C}^s$, classical approximation theory gives $D_{TV}(g,p^*) = O(K^{-s/d})$ and $\|g-p^*\|_{L^2} = O(K^{-s/d})$, so the density bound (a) decays as $O(K^{-2s/d})$. For score estimation, assuming $p^*$ is bounded away from zero (so that $\nabla\log p^*$ is controlled), passing to $\nabla\log$ loses one derivative, $\|\nabla\log g - \nabla\log p^*\|_{L^2(g)} = O(K^{-(s-1)/d})$, and the Fisher divergence---a \emph{squared} score error---satisfies $\mathcal{I}(g\|p^*) = O(K^{-2(s-1)/d})$. Thus, training on GMMs with sufficiently many components yields diminishing generalization error for any sufficiently smooth, non-degenerate target.

\section{Attention visualization}\label{sec: attention visualization}
\begin{figure}
    \centering
    \includegraphics[width=\linewidth]{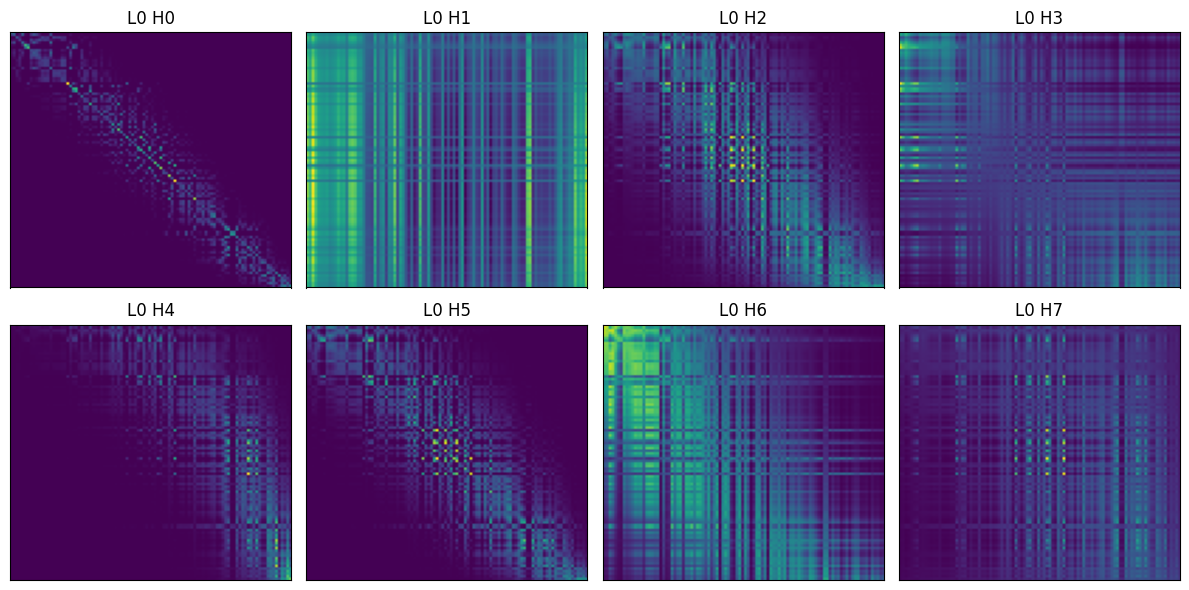}
    \caption{We visualize attention of the eight individual heads of layer 0 as a heatmap. Yellow color denotes higher attention weight and blue is lower. We choose the particle ordering so that nearby particles are close in the ordering. Heads 0, 2, and 5 specialize on nearby points, heads 1 and 6 specialize on far-away points, whereas heads 3, 4, and 7 attend in specific directions.}
    \label{fig:attention by head heatmap}
\end{figure}

\begin{figure}
    \centering
    \includegraphics[width=\linewidth]{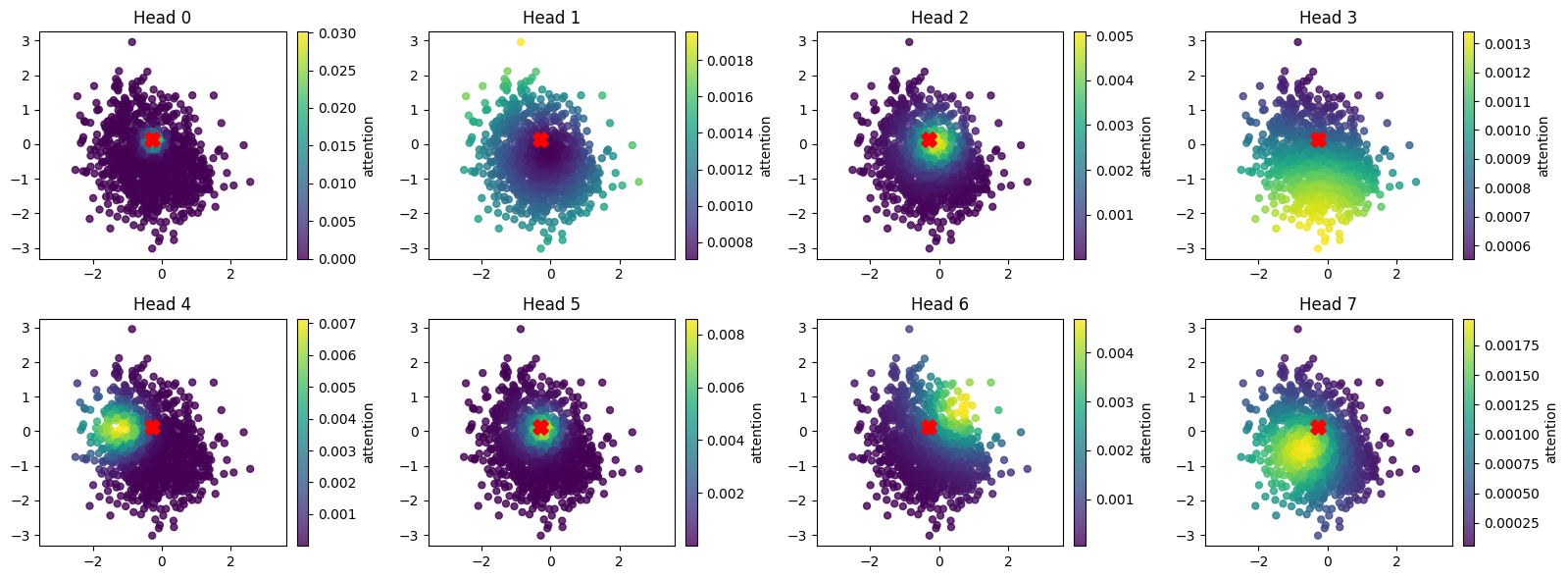}
    \includegraphics[width=\linewidth]{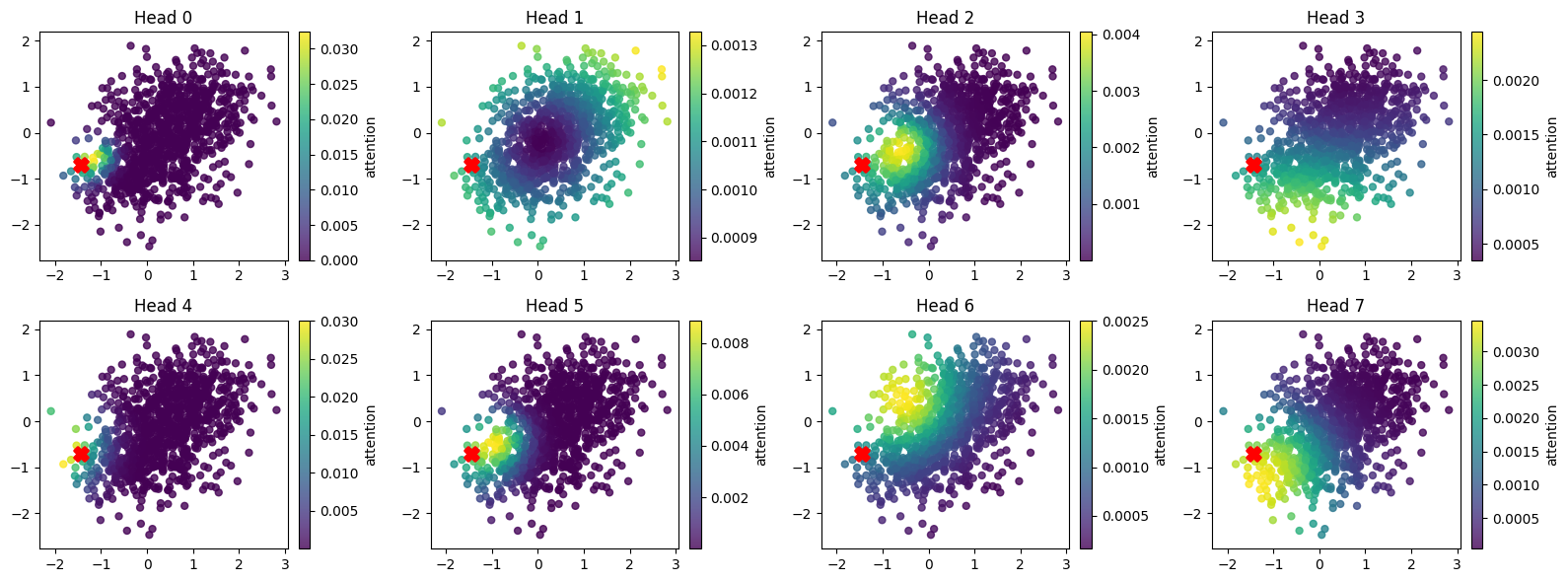}
    \includegraphics[width=\linewidth]{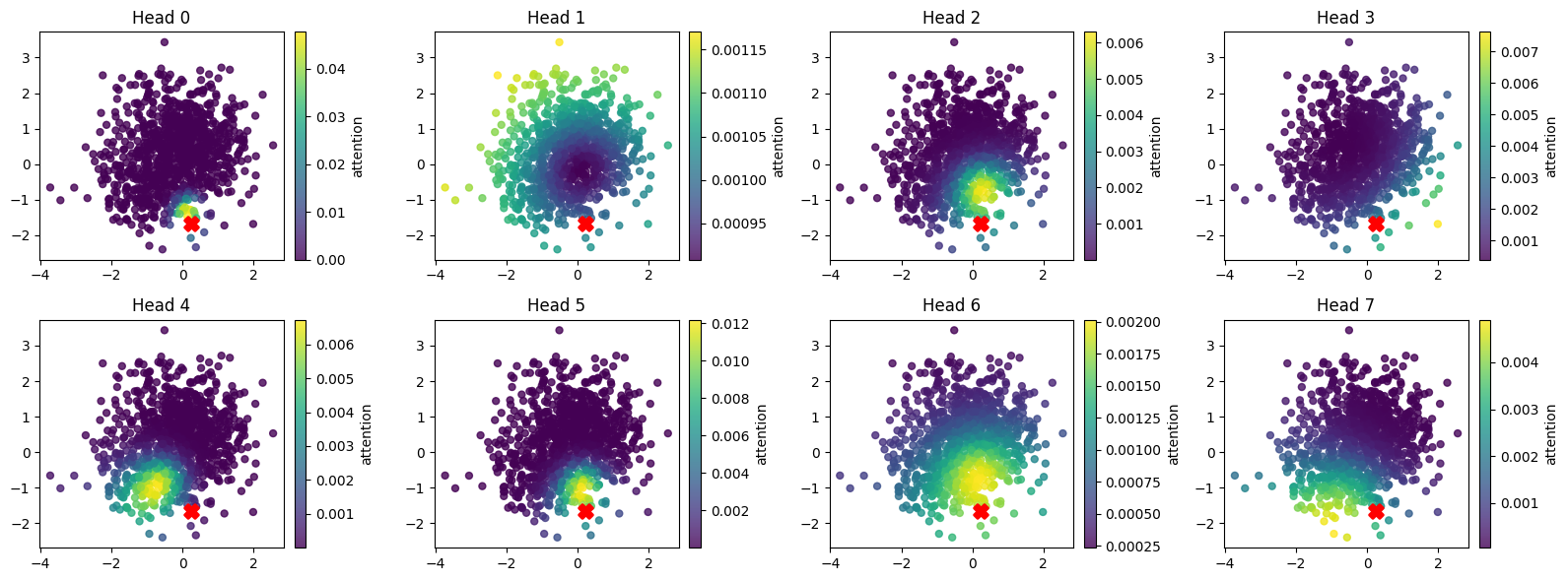}
    \caption{We visualize attention of the eight individual heads of layer 0 as a scatter plot. We choose a random query point, marked with a red \textcolor{red}{x}. Yellow color denotes higher attention weight and blue is lower. Heads 0, 2, and 5 specialize on nearby points, head 1 specializes on far-away points, whereas heads 3, 4, 6, and 7 attend in specific directions.}
    \label{fig:attention by head scatterplots}
\end{figure}

To interpret the outputs of the Transformer model we visualize the attention of the eight individual heads in layer 0 in two different ways. First, in Figure \ref{fig:attention by head heatmap} we show the full attention matrices as heatmaps. We choose the particle ordering so that nearby particles are close in the ordering. Second, in Figure \ref{fig:attention by head scatterplots} we choose a random query point, marked with a red \textcolor{red}{x}, and color other points according to the strength of the query point's attention. We clearly observe an emergent behavior -- heads specialize in different tasks. Head 1 specialized to look at far-away points, whereas Heads 0, 2, and 5 specialized to look at close- and mid-range interactions. Finally, Heads 3, 4, 6, and 7 specialized to look in specific directions. This emergent behavior further links the multi-head Transformer to kernel-based methods, but with multi-scale learned kernels.

\section{Comparison with the Score Matching Loss}\label{sec:score_matching_comparison}
The Hyv\"arinen score matching loss \cite{Hyvarinen2005} estimates the data score
without access to the true density by minimizing
\[
\mathcal{L}_{\mathrm{SM}}(\theta)
= \mathbb{E}_{x \sim f(x)}\!\left[
  \|\,s_\theta(x)\,\|^2
  + 2\,\nabla_x \!\cdot s_\theta(x)
\right],
\]
which corresponds to the Fisher divergence
$\mathbb{E}_f[\|s_\theta(x) - \nabla_x \log f(x)\|^2]$ up to a constant.

To avoid explicitly computing the divergence term, one can apply a
finite-difference denoising trick, replacing
$\nabla \!\cdot s_\theta(x)$ with a stochastic estimator
\[
\nabla\!\cdot s_\theta(x)
\;\approx\;
\mathbb{E}_{z \sim \mathcal{N}(0,I)}
\!\left[
  \frac{z^\top \big(s_\theta(x+\alpha z)
  - s_\theta(x-\alpha z)\big)}{2\alpha}
\right],
\]
yielding a practical divergence-free approximation used in many
implementations of score matching.

In Figure \ref{fig:sliced score matching vs tx} we compare the performance of the score matching loss compared to the proposed Transformer model. The Transformer model is superior in two respects. First, it does not need to be retrained on new samples. Second, the score matching loss is easy to under- or over-fit, as demonstrated in the figure by training for 10, 100 and 1000 steps.

\begin{figure}
    \centering
    \includegraphics[width=0.33\linewidth]{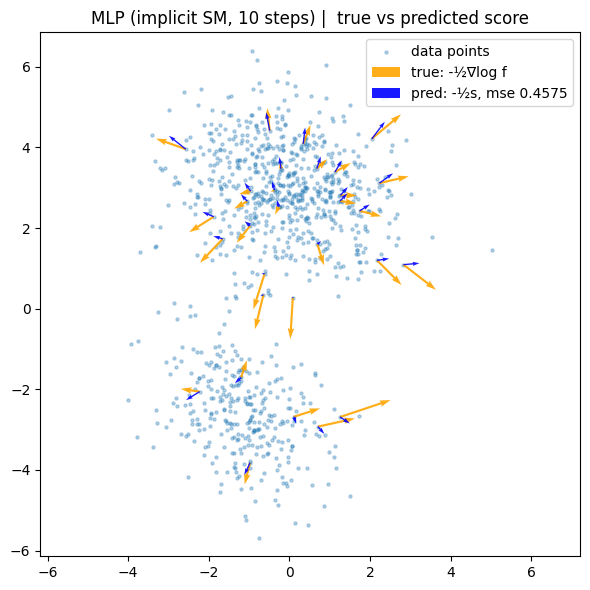}
    \includegraphics[width=0.33\linewidth]{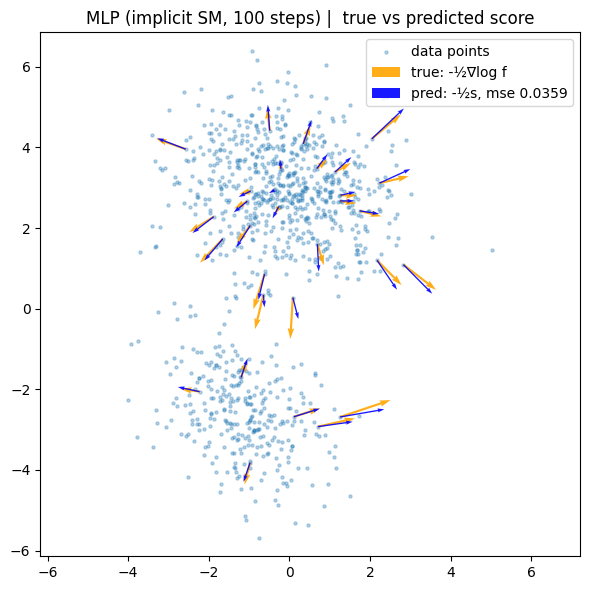}
    \includegraphics[width=0.33\linewidth]{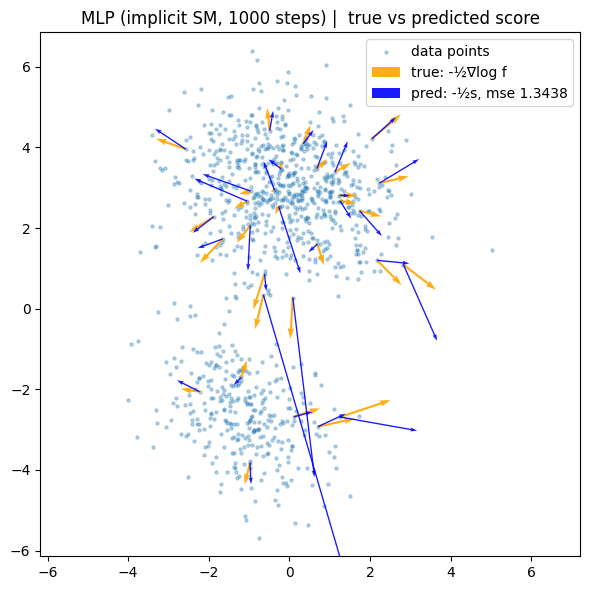}
    \includegraphics[width=0.33\linewidth]{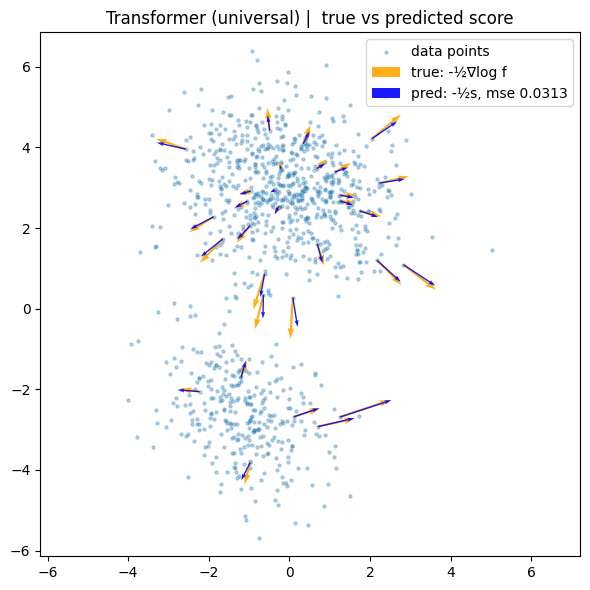}
    \includegraphics[width=0.33\linewidth]{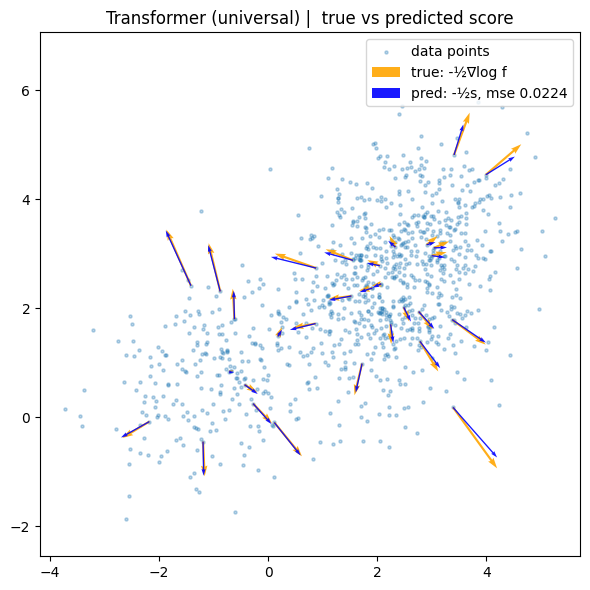}
    \includegraphics[width=0.33\linewidth]{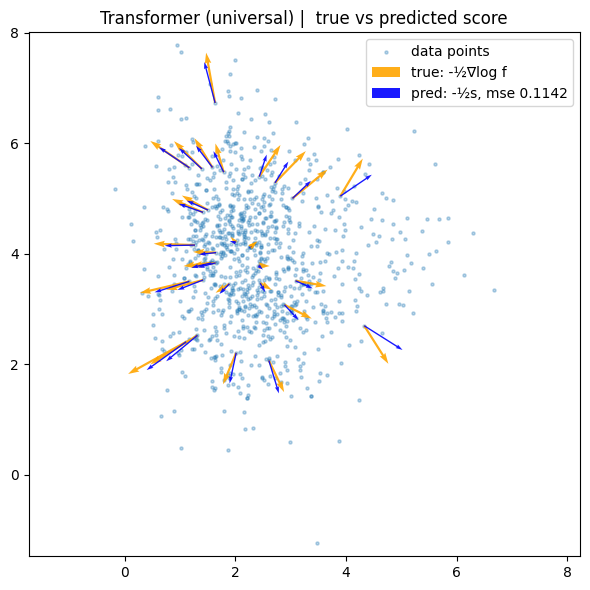}
    \caption{Comparison of sliced score matching and our transformer model. The transformer model can be used without retraining and does not suffer from overfitting. We plot the negated score for ease of visualization.}
    \label{fig:sliced score matching vs tx}
\end{figure}

\section{Runtime comparison}\label{sec:runtime}
\begin{figure}[h]
    \centering
    \includegraphics[width=0.49\linewidth]{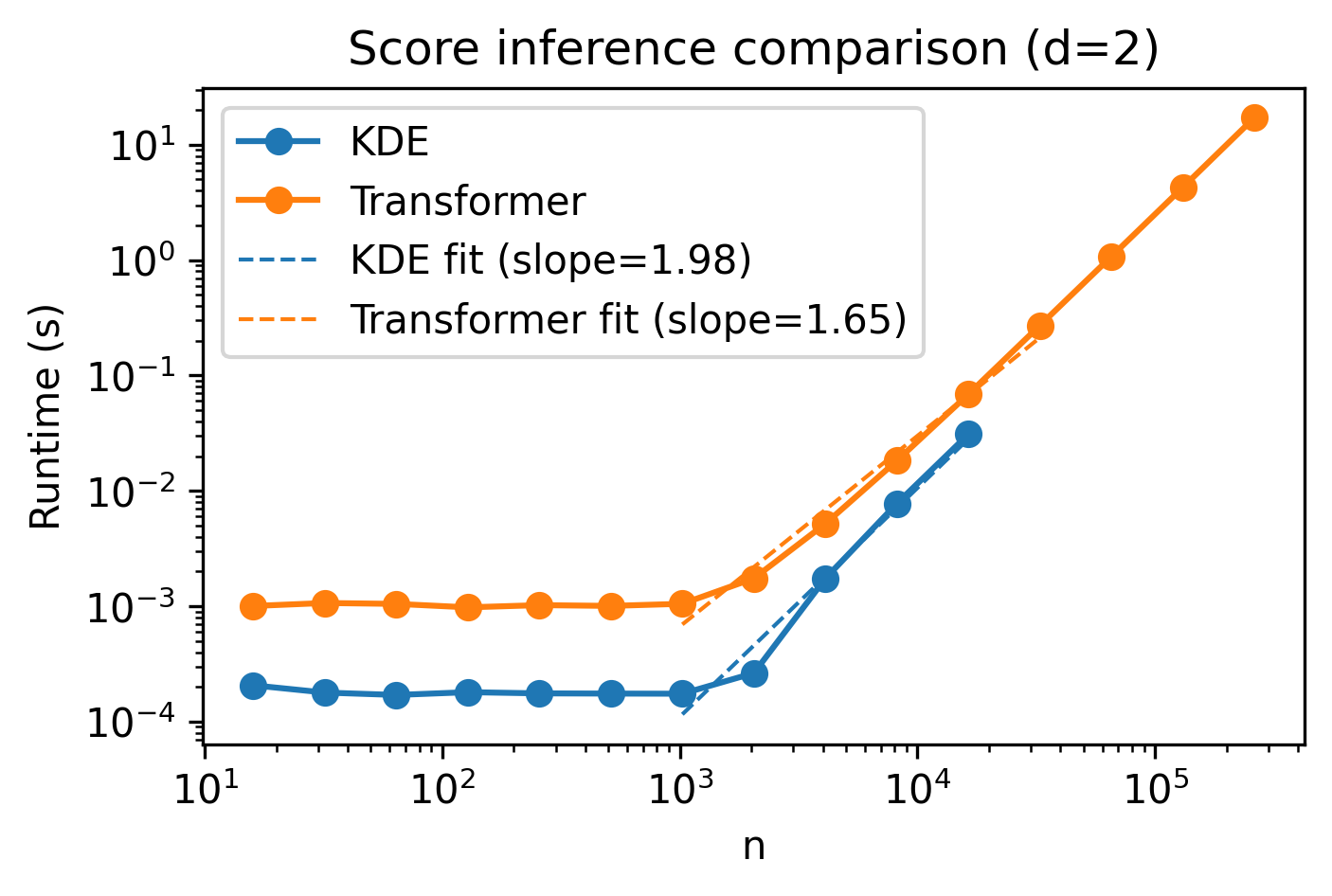}
    \includegraphics[width=0.49\linewidth]{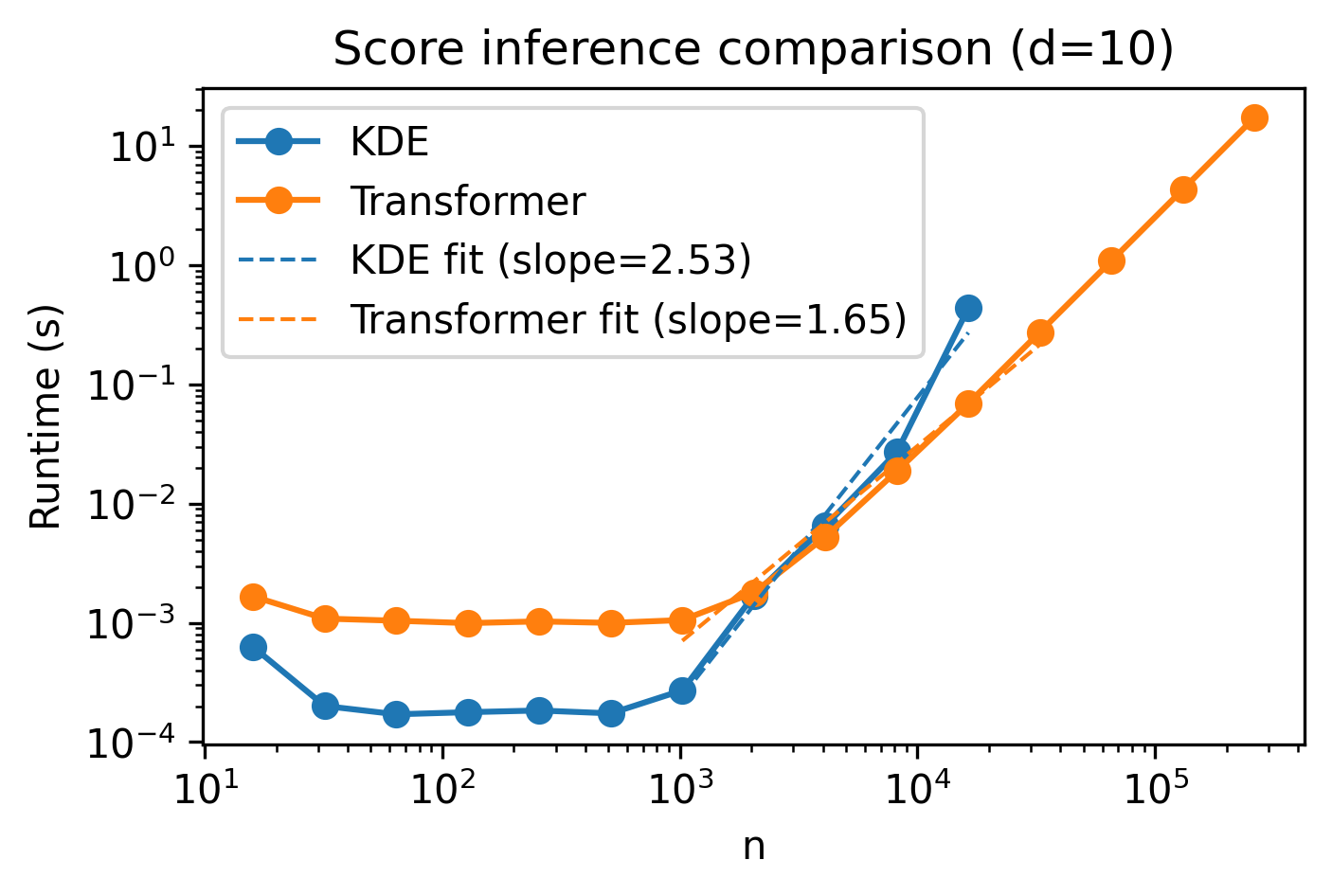}
    \caption{Runtime comparison between KDE and the Transformer model in 2 and 10 dimensions. Both are $O(n^2)$ asymptotically, but empirically the Transformer scales better and has improved memory efficiency. KDE encounters an OOM error at $n=2^{15}$.}
    \label{fig:runtime comparison with kde}
\end{figure}
While KDE is known to fail in moderate and high dimensions, it is thought to be very fast. We compare the wall-clock runtime of KDE score estimation (Scott's-rule bandwidth) and the Transformer model we propose. The Transformer model contains $800,000$ learned parameters. We use both KDE and the Transformer model on a single L40S GPU with 48GB of memory. Both methods are $O(n^2)$ asymptotically due to pairwise computations (kernel evaluations for KDE, attention for Transformers). However, in practice the observed scaling differs: Figure \ref{fig:runtime comparison with kde} shows that while KDE is faster for small sample size $n\leq 2048$, it becomes slower after this point, especially in higher dimension. Additionally, the naive KDE implementation runs out of GPU memory after $n=32,768$, while the Transformer benefits from the highly optimized attention kernels in modern deep learning frameworks, which are difficult to match with custom KDE implementations in practice.

\end{document}